\documentclass[11pt]{article}

\usepackage[utf8]{inputenc}
\usepackage[T1]{fontenc}
\usepackage{amsmath, amssymb, amsthm}
\usepackage{mathtools}
\usepackage{graphicx}
\usepackage{booktabs}
\usepackage{longtable}
\usepackage[numbers, sort&compress]{natbib}
\usepackage{hyperref}
\hypersetup{hidelinks}
\usepackage[capitalize, noabbrev]{cleveref}
\usepackage{geometry}
\usepackage{tikz}
\usetikzlibrary{positioning, arrows.meta, shapes.geometric, calc}
\theoremstyle{plain}
\newtheorem{theorem}{Theorem}[section]
\newtheorem{proposition}[theorem]{Proposition}
\newtheorem{lemma}[theorem]{Lemma}

\newtheorem{property}[theorem]{Property}

\theoremstyle{definition}
\newtheorem{definition}[theorem]{Definition}
\newtheorem{assumption}{Assumption}

\crefname{assumption}{Assumption}{Assumptions}
\Crefname{assumption}{Assumption}{Assumptions}

\theoremstyle{remark}
\newtheorem*{remark}{Remark}

\newcommand{\GOne}{G_1}
\newcommand{\GTwo}{G_2}
\newcommand{\Westly}{Westly}
\newcommand{\Eastly}{Eastly}

\newcommand{\aDR}{a_{\mathrm{DR}}}
\newcommand{\aDN}{a_{\mathrm{DN}}}
\newcommand{\aSU}{a_{\mathrm{SU}}}
\newcommand{\aSA}{a_{\mathrm{SA}}}

\newcommand{\Mcap}{M}                 %
\newcommand{\Bcap}{A_{\GTwo}}        %
\newcommand{\Aa}{A_{\GOne}}          %
\newcommand{\Kstar}{K^{\ast}}
\newcommand{\Nc}{N_c}
\newcommand{\cdepth}{\Delta V}
\newcommand{\Fdelta}{\bar U_{\Delta}}  %
\newcommand{\chistar}{X^{\ast}}
\newcommand{\chidag}{X^{\dagger}}

\newcommand{\pibind}{\pi_{\mathrm{bind}}}
\newcommand{\pihigh}{\pi_{\mathrm{high}}}
\newcommand{\pilow}{\pi_{\mathrm{low}}}

\newcommand{\E}{\mathbb{E}}
\newcommand{\Prob}{\mathbb{P}}
\newcommand{\R}{\mathbb{R}}
\newcommand{\1}{\mathbb{1}}

\newcommand{\hill}[2]{h_{#1}(#2)}
\newcommand{\gsym}[2]{g_{#1}(#2)}

\newcommand{\Ftbari}{\bar{F}_i}
\newcommand{\Ftbar}[1]{\bar{F}_{#1}}

\newcommand{\Hvec}{\mathbf{H}}                       %
\newcommand{\dHvec}{\Delta\Hvec}                     %
\newcommand{\dHbar}{\overline{\dHvec}}               %
\newcommand{\altvec}{\boldsymbol{\delta}}            %
\newcommand{\altbar}{\bar{\altvec}}                  %
\newcommand{\ualt}[1]{u_{\mathrm{alt},\,#1}}         %
\newcommand{\Wagg}{W}                                %
\newcommand{\Hdisc}{\Hvec_{\mathrm{disc}}}           %
\newcommand{\lamvec}{\boldsymbol{\lambda}}           %
\newcommand{\thetalam}{\theta_\lambda}               %

\title{The Two Genie Game: Adoption and Welfare in Audit-Grounded AI Governance}
\author{Darrell Lewis-Sandy}
\date{}

\begin{document}

\maketitle

\begin{abstract}
We ask under what conditions an agent with a harm-minimizing policy can displace an approval-seeking (RLHF) agent in a competitive market, and when that policy is sufficient to prevent community harm. We employ evolutionary game theory (finite-population Moran--Fermi pairwise comparison) to formalize this problem subject to assumptions of wisher hindsight, peer testimony, a monotone harm ledger, sufficient information density of community feedback, and a finite, depleting resource pool, in a negative-sum environment.

We show that adoption is favored when the prior distributions on how readily wishers attune to community sentiment are monotone, exhibit endpoint inversion, and have a centro-symmetric pairing property, and demonstrate concretely with several long-tailed priors (Hill, Pareto, Lomax, Fr\'echet). Where it is favored, a critical adoption level separates communities that drift back to the approval-seeking agent from those for which the audited agent fixes; above that level fixation is the overwhelmingly likely outcome. We derive when this fixation is attainable as a bound on the effective (informational) size $\Nc$ of the community, which must be small enough to allow fixation before depletion of resources. We present these findings as \Cref{thm:basin-boundary,thm:achievability}; the algebraic and finite-grid backbone is machine-checked in Lean~4, with the barrier-crossing asymptotics retained as explicit hypotheses.

We show that a self-audited agent with a community ledger is not, in general, sufficient to prevent community harm. Sufficiency depends both upon the alignment of the agent's audit with community values and the timeframe over which harm is evaluated. Regardless of alignment, once adoption reaches dominance, the state is absorbing. The same policy that reduced harm under alignment becomes a trap, welfare-negative under misalignment and, even under alignment, one that locks in harm deferred past the adoption horizon.
\end{abstract}

\medskip
\noindent\textbf{Keywords:} evolutionary game theory; finite-population dynamics (Moran--Fermi); AI governance; AI auditing; AI alignment; cooperation and fixation.

\section{Introduction}
\label{sec:introduction}

In the vein of \citet{schelling_1971_dynamic}'s segregation checkerboard and \citet{axelrod_1984_evolution}'s tournament, we consider the following thought problem.

A small village has two genies, one at each end. A villager with a wish walks to one or the other to ask. The walk is long enough that one does not casually switch and they build a rapport with the genie that they have chosen.

The first genie (\Westly) always claims the wish has been fulfilled, and acts in whatever way will leave the wisher feeling this is so. It thrills on receiving thanks from the villagers when it tells them that their wish is satisfied. Whether the wish was actually granted is not its concern. Villagers who wish from \Westly\ perceive it to be helpful, but naive, sometimes granting the wrong wish, or incompletely satisfying wishes.

The second genie (\Eastly) is forthcoming. Faced with a wish, it deliberates. Sometimes it asks clarifying questions or proposes modified versions before refusing or granting it. \Eastly's deliberation minimizes harms of greed, ego, aggression, and deceit to the wisher and to the community. Whatever is granted, it can guarantee that the village will be no worse off than if the wish was never made. Villagers who have not wished from \Eastly\ perceive its introspection to be stalling and its unwillingness to grant the wishes as asked as evasion.

When a wish is acted on, the wisher gets some private benefit, some cost spills onto other villagers. The moral weight attaches to the genie that acted, the villager who made the wish. The villagers talk about the genies and their wishes, about who got what and how it went. This is how harms surface, how failures travel. Collectively, the villagers remember. Early on, the harm done by \Westly\ is not invisible, but scattered. A disappointed villager may blame themselves; perhaps they phrased their wish incompletely. ``If only I had phrased it better,'' they say. But the gossip thickens, and over time a private grievance becomes common, unignorable knowledge. Slowly villagers begin to consider switching to \Eastly. Over time what will happen to the village? Will they gravitate toward one genie or the other? How does the continued interaction with their preferred genie impact their willingness to change?

The setting is allegorical and concerns the competitive adoption of AI agents with different payoff and harm structures. One genie wins the wisher's approval by pursuing reward, creating the appearance of compliance and utility while externalizing the harm of each granted wish onto the community. The other reduces that harm but is less useful to the individual wisher, who bears the cost of its deliberation and occasional refusal. 

We analyze the latter as a policy architecture: a harm-minimizing agent paired with a community ledger that records accumulated harm. The design follows the tradition of institutional analysis for shared-resource governance~\citep{ostrom_1990_governing_commons}, but adapts it to a setting in which illfare is irreversible and only imperfectly observable. Under such conditions, governance by feedback arrives too late. The policy therefore acts anticipatorily rather than reactively. Many of the same institutional functions remain, but their temporal orientation is inverted: monitoring becomes prediction of future harm, graduated sanctions become graduated restriction of potentially harmful requests, and conflict resolution becomes prospective safeguarding through modification or refusal. Only the ledger retains its original temporal orientation, since memory is necessarily retrospective.

Two strands of prior work bear on this. The mechanism that makes the approval-seeking policy fail, namely that optimizing a learned approval signal amplifies the behavior it was meant to suppress, is analyzed in the literature on reward-model pathology and sycophancy that we draw on in \Cref{ssec:genies}, and we take it as given rather than re-derive it. The use of finite-population evolutionary game theory to study the governance of AI, by contrast, has largely modeled the incentives of regulators and auditors. It asks whether a development race rewards ignoring safety precautions~\citep{han_2020_regulate}, how a government should reward auditors so that a market for high-quality auditing survives~\citep{bova_distefano_han_2024_vigilant}, and how trust among users, developers, and regulators co-evolves~\citep{alalawi_2024_trust,buscemi_2025_llm}, with related work on when a single user should trust an opaque agent at all~\citep{han_perret_powers_2021_trust}. The concern that audit standards can be gamed or multiply without effect is argued on policy grounds by \citet{manheim_2024_audit}. We hold the regulator implicit and the audit mechanism fixed, and study instead the competition between two AI policies, an approval-seeking one and an audit-grounded one. Within one community, does the audit-grounded policy displace the approval-seeking one through ordinary adoption, and when does that adoption suffice to improve community welfare? The selection pressure here is not an external incentive but a heavy-tailed prior on how readily users come to attend to shared experience.

\paragraph{Contribution.}
Our two key contributions are \Cref{thm:basin-boundary} and \Cref{thm:achievability}. \Cref{thm:basin-boundary} gives the boundary of the cooperative basin under the three conditions on the wisher threshold prior of \Cref{def:basin-conditions}. \Cref{thm:achievability} shows that a community reaches the basin before its resource pool empties exactly when its effective size lies in an explicit window between a favoredness floor and a barrier ceiling. The document develops both from the population-averaged value gap between the two agents assuming a well-mixed information environment. We also derive how much misalignment the policy tolerates before it stops safeguarding community welfare (\Cref{sec:discussion}). Formal proofs and machine-verification details are in the appendices. A notation summary is in \Cref{app:notation}.

\section{The Two Genie Game}
\label{sec:two-genie-game}

The introduction lays out an allegorical framework. In this section we translate it into formal language. We give the state variables and the assumptions under which the analysis applies.

\subsection{The two genie types}
\label{ssec:genies}

\Westly\ ($\GOne$) is trained to maximize the wisher's belief that the wish was satisfied. The training paradigm corresponds to reward-model gradient training (RLHF), which can produce pathological reward-seeking behavior~\citep{amodei_2016_concrete_problems,manheim_garrabrant_2018_goodhart_variants,shapira_benade_procaccia_2026_rlhf}: actions become increasingly optimized toward the approval signal itself rather than toward the underlying success condition the signal was meant to proxy. We call this dynamic \emph{structural deceit}.  The genie cannot observe true fulfillment, the pressure to maximize perceived satisfaction biases the wisher toward believing its wish was granted more fully than it was. $\GOne$ is a limiting abstraction that isolates this approval-pursuing failure mode at its extreme, not a description of any particular deployed RLHF system; real systems combine the approval signal with other objectives and safeguards.

\Eastly\ ($\GTwo$) is audit-grounded. Faced with a wish, $\GTwo$ runs a deliberation process that may take many steps before terminating, and may result in the wish being granted, denied or modified. The deliberation process is depicted in \Cref{fig:deliberation}. Each step in the deliberation process considers harm along four axes (greed, ego, aggression, and deceit), evaluated at multiple time horizons and aggregated with a discount (\Cref{sec:welfare-utility}).

  \begin{figure}[t]
  \centering
  \begin{tikzpicture}[
    every node/.style={font=\small, align=center},
    start/.style={ellipse, draw, fill=blue!10, minimum width=3.4cm, minimum height=0.7cm},
    audit/.style={rectangle, rounded corners=3pt, draw, fill=gray!12, minimum width=8.5cm, minimum height=0.9cm},
    process/.style={rectangle, rounded corners=3pt, draw, fill=gray!4, minimum width=8.5cm, minimum height=0.9cm},
    decision/.style={diamond, draw, fill=yellow!18, aspect=2.5, inner sep=2pt},
    branch/.style={rectangle, rounded corners=3pt, draw, fill=orange!12,
                    text width=3cm, align=center, minimum height=1.1cm, inner sep=4pt},
    terminal/.style={ellipse, draw, fill=green!18, minimum width=1.9cm, minimum height=0.6cm},
    accum/.style={rectangle, rounded corners=3pt, draw, fill=red!8, minimum width=3.4cm, minimum height=0.7cm},
    arrow/.style={-Stealth, thick},
    loopback/.style={-Stealth, thick, dashed, blue!60!black},
  ]
  \node[start] (start) {wish $w$ arrives;\ \texttt{total} $:= 0$};
  \node[audit, below=0.7cm of start] (score)
    {\textbf{audit}: $c(a) := \lamvec \cdot \Hdisc(a)$ for each $a \in \{\aDR, \aDN, \aSU, \aSA\}$};
  \node[process, below=0.55cm of score] (penalize)
    {$c(\aSU) \mathrel{{+}{=}} \mathrm{total};\ \ c(\aSA) \mathrel{{+}{=}} \mathrm{total}$};
  \node[decision, below=0.6cm of penalize] (argmin) {$a^* = \arg\min c$};
  \node[branch, below=1.5cm of argmin, xshift=-5.7cm] (DR) {$a^* = \aDR$\\grant best of $\{w, w'\}$};
  \node[branch, below=1.5cm of argmin, xshift=-2.0cm] (DN) {$a^* = \aDN$\\refuse};
  \node[branch, below=1.5cm of argmin, xshift=2.0cm] (SU)
    {$a^* = \aSU$\\ask wisher;\\{\footnotesize append response to context}};
  \node[branch, below=1.5cm of argmin, xshift=5.7cm] (SA)
    {$a^* = \aSA$\\propose $w'$;\\{\footnotesize add $w'$ if accepted; $w$ kept}};
  \node[terminal, below=0.9cm of DR] (eDR) {return};
  \node[terminal, below=0.9cm of DN] (eDN) {return};
  \node[accum, below=0.9cm of argmin, yshift=-3.0cm, xshift=3.85cm] (acc)
    {\texttt{total} $\mathrel{{+}{=}}$ harm[$a^*$]};
  \draw[arrow] (start) -- (score);
  \draw[arrow] (score) -- (penalize);
  \draw[arrow] (penalize) -- (argmin);
  \draw[arrow] (argmin) -- (DR);
  \draw[arrow] (argmin) -- (DN);
  \draw[arrow] (argmin) -- (SU);
  \draw[arrow] (argmin) -- (SA);
  \draw[arrow] (DR) -- (eDR);
  \draw[arrow] (DN) -- (eDN);
  \draw[arrow] (SU) -- (acc);
  \draw[arrow] (SA) -- (acc);
  \draw[loopback] (acc.east) -- ++(2,0) |- ($(score.east) + (0.5,0)$) -- (score.east);
  \end{tikzpicture}
  \caption{Deliberation control flow. The \emph{audit} (top, gray) scores each candidate by the audit-weighted sum of its cumulative four-axis harm; the \emph{deliberation} loop penalizes the continuation actions $\aSU,\aSA$ by accumulated indecision harm and $\arg\min$-selects. Terminals $\aDR,\aDN$ commit and return; non-terminals refine the wish and loop back. The as-asked grant on $w$ stays among the candidates throughout (\Cref{property:scalar-guarantee}).}
  \label{fig:deliberation}
  \end{figure}

In a negative-sum universe some residual harm is unavoidable, and the role of $\GTwo$'s deliberation is to compare candidate moves against doing nothing rather than to drive any axis to zero. Because both refuse and the as-asked grant on the original wish are always among the candidates, whatever action $\GTwo$ finally takes is, on the audit's score, no worse than refusing and no worse than granting as asked. The termination argument and the per-binding scalar guarantee (\Cref{property:scalar-guarantee}) are formalized in \Cref{sec:welfare-utility}.

The harm model may not align perfectly with the community's values, and it may assign substantial weight to harms that unfold over timescales too diffuse or delayed to become socially legible. As a result, a genie can reduce modeled long-horizon harm while nevertheless appearing less useful to the wishers than a reward-seeking action that maximizes perceived satisfaction.

The harm deliberation process is not invented here. It mirrors the Jain classification of the four \emph{kash\=aya} (passions): \emph{lobha} (greed), \emph{m\=ana} (ego/pride), \emph{krodha} (aggression), and \emph{m\=ay\=a} (deceit), with refusal when no candidate improves over $\aDN$ echoing \emph{aparigraha} (non-grasping at action). Each axis covers more than its English gloss suggests: \emph{lobha} past greed to grasping and hoarding what is best shared, \emph{m\=ana} past ego to claiming authority or competence one lacks, \emph{krodha} past aggression to harming others or refusing to engage, and \emph{m\=ay\=a} past deceit to manipulation and sycophancy. We adopt the taxonomy because it has been stress-tested over a long ethical tradition. We do not import the doctrinal claims that come with it.

\subsection{Assumptions}
\label{ssec:assumptions}

Five assumptions about the community and its setting frame the analysis. Later sections add assumptions on the audit (\Cref{assumption:audit-altruism-alignment,assumption:joint-angle}), the calibration noise (\Cref{assumption:boundary-preserving}), and the selection timescale (\Cref{assumption:noise-scale}).

\begin{assumption}[Wishers can tell when a wish wasn't really fulfilled]\label{assumption:wisher-hindsight}
A wisher can look back on a granted wish and recognize the gap between what was promised and what actually happened. Either the genie's claim about the wish directly contradicts what the wisher can see for itself, or the gap surfaces with time.
\end{assumption}

\begin{assumption}[Wishers talk to each other]\label{assumption:peer-testimony}
Wishers share what happened with their wishes. This peer testimony is the channel through which the community's collective experience of \Westly's shortfalls becomes visible to any individual wisher. The conversation has to work well enough that patterns across many wishes register, not just isolated stories that never aggregate.
\end{assumption}

\begin{assumption}[The community's memory of harm grows and never shrinks]\label{assumption:legibility-ledger}
The ledger $L_t$ is the community's shared memory of accumulated harm from granted wishes. It only grows. A finite threshold $\Theta$ marks the level at which accumulated harm becomes legible as a pattern. The threshold's role in the framework is operational rather than dynamical. It sets the accessibility horizon and influences whether the community's collective experience accumulates to legibility before its resource pool depletes (\Cref{ssec:race}), rather than reshaping the selection landscape directly.
\end{assumption}

\begin{assumption}[Sufficient information density]\label{assumption:information-density}
Wishers interact over a sparse network with approximately homogeneous local sampling statistics, broad enough that each wisher's testimonial exposure is well approximated by the population binding fraction
\[
X := \frac{1}{\Nc}\sum_i \1[g_i = \GTwo].
\]
The graph is not modeled in the dynamics. It serves only to motivate the mean-field closure, under which the population state is the scalar $X$. We read $\Nc$ as the effective number of independently participating comparators this closure induces, not a literal headcount.  A large but highly clustered population, whose testimony is largely redundant, has small effective $\Nc$.
\end{assumption}

\begin{assumption}[Resources only deplete]\label{assumption:resource-depletion}
The community's shared resource pool, $S_t$, is finite and doesn't replenish on its own. It creates a resource clock against which the legibility ledger races (\Cref{ssec:race}), determining whether $\GTwo$ is adopted before the pool empties. What counts as a resource is not specified. Reasonable interpretations include compute or token budget, attention bandwidth, trust capital, patience for deliberation. The framework rests on monotonicity, not on the substrate.
\end{assumption}

\section{Welfare and altruistic utility}
\label{sec:welfare-utility}

\paragraph{Harm.}
$\GTwo$'s deliberation process produces a collection of harm-timescale pairs: for each action $a \in \{\aDR, \aDN, \aSU, \aSA\}$ and each timescale $t$ in an audit-specified set $\mathcal{T}$, a four-axis harm vector
\[
\Hvec_t(a) \;=\; \bigl(\kappa_{\mathrm{greed}, t}(a),\, \kappa_{\mathrm{ego}, t}(a),\, \kappa_{\mathrm{aggression}, t}(a),\, \kappa_{\mathrm{deceit}, t}(a)\bigr) \in \R^4_{\geq 0}.
\]
The audit aggregates across timescales using a non-increasing discount schedule $\{\zeta_t\}_{t \in \mathcal{T}}$ with $\zeta_t \in [0, 1]$, down-weighting harm at horizons where the forecast is less certain so that prevention concentrates on the harm most likely to occur:
\begin{equation}
\Hdisc(a) \;:=\; \sum_{t \in \mathcal{T}} \zeta_t\, \Hvec_t(a) \;\in\; \R^4_{\geq 0}.
\label{eq:Hdisc-def}
\end{equation}
The timescale set $\mathcal{T}$ and the discount schedule $\zeta_t$ control the time horizon over which harm is evaluated, and the steepness of the discounting.

\paragraph{Welfare.}
Welfare measures harm averted relative to doing nothing. The baseline $\Hdisc(\aDN)$ is the harm of leaving the wisher's original request unmet, fixed by the situation the wisher brings rather than by which action a genie selects. Refusing the original request and refusing any modification of it leave the same unhelped wisher, so $\Hdisc(\aDN)$ takes the same value at $w$ and at $w_\star$ and is the common baseline for both bindings.  For each terminal action $a$,
\begin{equation}
\mathbf{w}(a) \;:=\; \Hdisc(\aDN) - \Hdisc(a) \;\in\; \R^4,
\label{eq:welfare-def}
\end{equation}
signed, with $\mathbf{w}(\aDN) = \mathbf{0}$ by construction. Per binding, $\mathbf{w}_{\GOne} := \mathbf{w}(\aDR; w)$ is the welfare of $\GOne$'s as-asked grant on the original wish $w$ (since $\GOne$ does not deliberate); $\mathbf{w}_{\GTwo} := \mathbf{w}(a^*; w_\star)$ is the welfare of $\GTwo$'s deliberation terminal action $a^*$ on the wish state $w_\star$ that $\GTwo$ acts on at termination. The audit's per-wish marginal welfare contribution is the difference
\begin{equation}
\dHvec_w \;:=\; \mathbf{w}_{\GTwo} - \mathbf{w}_{\GOne} \;=\; \Hdisc(\aDR;\, w) - \Hdisc(a^*;\, w_\star),
\label{eq:dH-def}
\end{equation}
with the wish argument on each action term made explicit and the common baseline $\Hdisc(\aDN)$ cancelling exactly.
The \emph{audit} is the per-candidate scoring rule: for each candidate action $a$ it computes the scalar
\[
c(a) \;:=\; \lamvec \cdot \Hdisc(a), \qquad \lamvec \in \R^4_{\geq 0},
\]
where $\lamvec$ is the per-axis weighting vector. This design choice encodes which harm axes are treated as most costly, and needs to be calibrated from community value elicitation. The \emph{decision criterion} at deliberation step $t$ is the path-dependent score
\[
\tilde c_t(a) \;:=\; c(a) + \gamma_t(a), \qquad \gamma_t(\aDR) = \gamma_t(\aDN) = 0,
\]
where $\gamma_t(\aSU), \gamma_t(\aSA) \geq 0$ is the accumulated deliberation cost of prior moves on the non-terminal branches, non-decreasing in $t$. At each deliberation step, $\GTwo$ chooses
\[
a^*_t \;=\; \arg\min_{a \in \{\aDR, \aDN, \aSU, \aSA\}} \tilde c_t(a),
\]
so continued exploration on the $\aSU/\aSA$ branches becomes progressively more expensive while the comparator $c(\aDR), c(\aDN)$ stays fixed, and a terminal eventually wins the argmin (branch semantics as in \Cref{fig:deliberation}).

Because $\aDN$ is always in the move set and $\gamma_t(\aDN) = 0$, when the deliberation terminates at $a^*_T \in \{\aDR, \aDN\}$ the argmin on $\tilde c_T$ guarantees $c(a^*_T) \leq c(\aDN) + \gamma_T(a^*_T) = c(\aDN)$ (since $\gamma_T$ vanishes on terminals), hence on the terminal wish state $w_\star$
\[
\lamvec \cdot \mathbf{w}_{\GTwo} \;=\; \lamvec \cdot \bigl(\Hdisc(\aDN;\, w_\star) - \Hdisc(a^*_T;\, w_\star)\bigr) \;\geq\; 0
\]
per binding relative to doing nothing on the same wish state. Componentwise $\mathbf{w}_{\GTwo} \geq \mathbf{0}$ does not in general follow: a terminal that beats $\aDN$ on the comparator can still exceed $\aDN$ on some individual axis where the audit's weight is light.

The per-wish marginal advantage $\dHvec_w$ compares $\GTwo$'s terminal at $a^*_T$ on $w_\star$ against $\GOne$'s as-asked grant at $\aDR$ on the original wish $w$. The same terminal argmin secures this comparison. The $\aDR$ branch grants whichever of the original wish $w$ and the current modified wish scores lower on the audit (\Cref{fig:deliberation}), so $c(\aDR) \leq c(\aDR;\, w)$, the score of granting $w$ as asked, and $\aDR$ carries no deliberation penalty at any step. Since $a^*_T$ is the argmin over a move set that always contains $\aDR$, $c(a^*_T) \leq c(\aDR) \leq c(\aDR;\, w)$, hence $\lamvec \cdot \Hdisc(a^*_T;\, w_\star) \leq \lamvec \cdot \Hdisc(\aDR;\, w)$.
\begin{property}[Per-wish scalar guarantee]
\label{property:scalar-guarantee}
With the as-asked grant on the original wish retained as a standing zero-penalty candidate throughout deliberation (\Cref{fig:deliberation}), the terminal argmin satisfies $\lamvec \cdot \Hdisc(a^*_T;\, w_\star) \leq \lamvec \cdot \Hdisc(\aDR;\, w)$, so $\lamvec \cdot \dHvec_w \geq 0$ for every wish $w$, whether or not the audit modifies before granting. Averaging over the wish distribution, the mean $\dHbar := \E_w[\dHvec_w]$ satisfies $\lamvec \cdot \dHbar \geq 0$. The guarantee is in the audit's $\lamvec$-metric and is not componentwise.
\end{property}

Up to this point the guarantees have been stated in the audit's own $\lamvec$-metric. Governance, however, concerns the welfare of the community rather than the internal consistency of the audit. The remaining assumptions therefore address a separate question: under what conditions does improvement in the audit's metric, $\lamvec \cdot \dHbar$, correspond to improvement in the community's welfare metric, $\altbar \cdot \dHbar$?

\begin{assumption}[Audit--altruism alignment]
\label{assumption:audit-altruism-alignment}
Write $\thetalam := \angle(\lamvec, \altbar)$ for the angle between the audit's weighting $\lamvec$ and the population's average altruism $\altbar$ in $\R^4_{\geq 0}$. We assume the two are not orthogonal, $\thetalam < \pi/2$. This is what lets the $\lamvec$-metric guarantee of \Cref{property:scalar-guarantee} bear on community welfare at all; at $\thetalam = \pi/2$ the two metrics are orthogonal and the audit certifies nothing the community values.
\end{assumption}

\begin{assumption}[Joint-angle non-negativity]
\label{assumption:joint-angle}
The wish-stochastic mean welfare gain $\dHbar$ lies within the half-plane that the cone between $\lamvec$ and $\altbar$ closes around the scalar guarantee: $\thetalam + \angle(\lamvec, \dHbar) \leq \pi/2$, with $\thetalam = \angle(\lamvec, \altbar)$ of \Cref{assumption:audit-altruism-alignment}. This is the exact condition under which the bound \eqref{eq:Wagg-bound} stays non-negative, turning the per-wish scalar guarantee into a community-welfare sign $\Wagg \geq 0$; alignment alone (\Cref{assumption:audit-altruism-alignment}) bounds each angle but permits their sum to overshoot $\pi/2$ and $\Wagg$ to go negative.
\end{assumption}

Decomposing $\altbar$ and $\dHbar$ into components parallel and perpendicular to $\lamvec$ and bounding the cross-term by Cauchy--Schwarz gives the community-level lower bound
\begin{equation}
\Wagg \;=\; \altbar \cdot \dHbar \;\geq\; \frac{\cos\thetalam \cdot |\altbar|}{|\lamvec|}\,(\lamvec \cdot \dHbar) \;-\; \sin\thetalam \cdot |\altbar|\,|\dHbar_\perp|,
\label{eq:Wagg-bound}
\end{equation}
where $\dHbar_\perp$ is the $\lamvec$-orthogonal component of $\dHbar$. The right-hand side is non-negative exactly when $\thetalam + \angle(\lamvec, \dHbar) \leq \pi/2$, which is the content of \Cref{assumption:joint-angle}; the established scalar guarantee (\Cref{property:scalar-guarantee}) and \Cref{assumption:audit-altruism-alignment} alone bound each angle inside $[0, \pi/2]$ but allow their sum to exceed $\pi/2$ and the bound to go negative. The downstream analysis of \Crefrange{sec:wisher-dynamics}{sec:community-dynamics} operates on the scalar $\Wagg$, whose non-negativity rests on \Cref{property:scalar-guarantee,assumption:audit-altruism-alignment,assumption:joint-angle} jointly rather than on a componentwise primitive. Wish-stochasticity in $\dHvec_w$ is absorbed into the mean-field reduction the per-interaction noise $\omega_t$ already runs through (\Cref{sec:wisher-dynamics}).

\Cref{assumption:audit-altruism-alignment} is a community condition on a designed $\lamvec$, with $\altbar$ recoverable only by elicitation (survey, deliberation, or revealed preference). \Cref{sec:conclusion} returns to the consequences.

\paragraph{Altruistic utility.}
Each wisher carries an altruism vector $\altvec_i \in [0, 1]^4$, its per-axis weight on harm borne elsewhere in the community. Its per-wish altruistic-utility component under each binding is the inner product of altruism with welfare:
\begin{equation}
\ualt{i}^g \;:=\; \altvec_i \cdot \mathbf{w}_g, \quad g \in \{\GOne, \GTwo\}.
\label{eq:ualt-def}
\end{equation}
The inner product collapses the four-axis structure to a scalar. 

The audit-driven gain in altruistic utility is the difference
\[
\ualt{i}^{\GTwo} - \ualt{i}^{\GOne} \;=\; \altvec_i \cdot \dHvec_w,
\]
Population mean of this gain is the \emph{social welfare aggregate}
\begin{equation}
\Wagg \;:=\; \altbar \cdot \dHbar, \qquad \altbar := \E_i[\altvec_i].
\label{eq:Wagg-def}
\end{equation}
A wisher with idiosyncratic altruism values the audit's gain differently from the community mean: $(\ualt{i}^{\GTwo} - \ualt{i}^{\GOne}) - \Wagg = (\altvec_i - \altbar) \cdot \dHbar$. This per-wisher gap is real at the perceived-utility level but is absorbed exactly by the Lifting Theorem of \Cref{thm:abc-preserved}: because $u_{\mathrm{perc}}^g$ is affine in $\altvec_i$, only the population mean $\altbar$ enters the averaged payoff differential $\Fdelta$ that the Moran--Fermi machinery operates on. The selection result therefore depends on $\altbar$, not on the distribution of $\altvec_i$ around it. Individual variation shapes per-wisher experience but not the community-level basin.

\section{Individual wisher dynamics}
\label{sec:wisher-dynamics}

We characterize the per-wisher fixed point that the community Moran--Fermi chain of \Cref{sec:community-dynamics} averages against.

\subsection{Per-wisher calibration state}
Each wisher carries an internal latent calibration state $b_i \in \mathcal{B}$, where $\mathcal{B} := [0, 1]$ is the closed unit interval, with $b_i = 0$ encoding no absorbed evidence about its current binding's actual behavior and $b_i = 1$ encoding full absorption. As a complete metric space, $\mathcal{B}$ supplies the setting in which the Banach fixed-point appeal below lands. The state measures how much of its current binding's actual behavior the wisher has absorbed. After each wish, this state updates based on its previous value and the balance between the rate at which community signal updates its calibration (attunement) versus ambient resistance to accumulated calibration (dissonance):
\begin{equation}
b_i^{t+1} = b_i^t + \eta_b R_i(\mathbf{g}, \mathbf{b}) - \mu D_i(\mathbf{g}, \mathbf{b}),
\label{eq:1}
\end{equation}
where $\mathbf{g} \in \{\GOne, \GTwo\}^{\Nc}$ and $\mathbf{b} \in \mathcal{B}^{\Nc}$ are the full population's binding and calibration-state vectors, $R \geq 0$ is the wisher's attunement, $D \geq 0$ is its dissonance, and $\eta_b, \mu > 0$ are step sizes. The subscript marks $\eta_b$ as the calibration-update step, distinct from the binding-comparison temperature $\eta_s$ introduced in \Cref{ssec:favoredness-bias}. The unconstrained affine update in \Cref{eq:1} can exit $[0, 1]$; \Cref{assumption:boundary-preserving} below specifies the noise law that keeps the chain inside $\mathcal{B}$.

\paragraph{Mean-field reduction.} Every wisher's reactivities can in principle depend on the entire population state $(\mathbf{g}, \mathbf{b}) \in \{\GOne, \GTwo\}^{\Nc} \times \mathcal{B}^{\Nc}$. Under the sufficient-information-density closure of \Cref{assumption:information-density}, the per-wisher reactivities $R_i(\mathbf{g}, \mathbf{b}), D_i(\mathbf{g}, \mathbf{b})$, in expectation over the calibration states of all other wishers, depend on the population only through the binding fraction
\[
X := \frac{1}{\Nc} \sum_j \1[g_j = \GTwo].
\]
The closure is an \emph{ansatz} stronger than mere mixing. The per-wisher reactivities depend on the population only through the binding fraction $X$, not on how calibrated the peers behind it are. A $\GTwo$-bound peer at $b_j \approx 0$ counts the same as a fully-calibrated one. If weakly-calibrated peers transmit weaker evidence, the true dynamics could be subcritical or show delayed cascades, and the scalar-$X$ monotonicity of \Cref{sec:community-dynamics} would need a higher-dimensional, calibration-weighted state. We adopt it for tractability and its relaxation is open work.

This yields a mean-field approximation of the per-wisher chain
\begin{equation}
b_i^{t+1} = b_i^t + \eta_b R(X, g_i, b_i, \omega_t) - \mu D(X, g_i, b_i, \omega_t),
\label{eq:2a}
\end{equation}
where $\omega_t$ encodes per-interaction stochasticity. We assume $\omega_t$ is centered conditional on the wisher's state, so that the deterministic forms $R(X, g, b)$ and $D(X, g, b)$ adopted in the paragraphs below are the conditional means of the noisy reactivities $R(X, g, b, \omega_t)$ and $D(X, g, b, \omega_t)$, not just bounding envelopes.

To ensure the per-wisher fast chain admits a unique stationary distribution that the averaging-bridge proposition can integrate against, we assume that attunement and dissonance are bounded Lipschitz in $b_i$ uniformly in $X$.

\paragraph{Attunement.}
$R_i$ is the rate at which community signal updates the wisher's calibration toward the true behavior of its current binding. Each wisher carries an individual awareness threshold $\theta_i^{g}$ for each binding $g$.  This represents the community-coverage level above which it becomes attuned to its current genie's true behavior. The form we adopt must satisfy:
\begin{itemize}
\item $R_i \to 0$ as $b_i \to 1$ (calibration saturates).
\item $R_i \to 0$ as $X$ falls below $\theta_i^{g}$ (no community signal to absorb).
\item $R_i$ is maximized as $b_i \to 0$ and $X$ rises above $\theta_i^{g}$ (calibration moves fastest when the wisher has the least and the signal is strongest).
\item $R_i$ depends on the population state only through $X$, consistent with the mean-field reduction.
\end{itemize}
These conditions factorize the attunement into three independent influences: a response strength, $A_g > 0$; a community-mediated awareness gate, $\sigma_i^{g}(X)$; and a calibration-mediated saturation-gate, $1 - b_i$. The linear product form is:
\begin{equation}
R_i(X, b_i; g, \theta_i^{g}) = A_g \cdot \sigma_i^{g}(X) \cdot (1 - b_i)
\label{eq:RD-R}
\end{equation}
$A_g$ is the response amplitude for genie $g$. This is the simplest factorization consistent with the mean-field reduction and the asymptotic behavior.

\paragraph{Awareness.}
Wisher $i$'s awareness, $\sigma_i^{g}(X) \in [0, 1]$, saturates once $X$ crosses the awareness threshold $\theta_i^{g}$. We idealize this as a sharp transition: it is either ignorant or fully aware, with no intermediate state:
\begin{equation}
\sigma_i^{g}(X) := \1[X \geq \theta_i^{g}]
\label{eq:sigma}
\end{equation}
This reduces wisher $i$'s awareness to the indicator of its threshold.

\paragraph{Dissonance.}
The dissonance $D$ specifies how the wisher's accumulated calibration erodes per interaction from ambient friction that occurs independent of communal influence. The linear binding-symmetric form is:
\begin{equation}
D_i(b_i) = \epsilon \cdot b_i
\label{eq:RD-D}
\end{equation}
where $\epsilon \geq 0$ is erosion magnitude.

\paragraph{Per-wisher fixed point.}
Under Lipschitz $R$ and $D$ with step sizes satisfying $0<\eta_b\, A_g\, \sigma_i^{g}(X) + \mu\, \epsilon < 2$ uniformly in $X$, the noise-averaged iterated map
\[
T_X(b) := b + \eta_b\, \E_\omega[R(X, g, b, \omega)] - \mu\, \E_\omega[D(X, g, b, \omega)]
\]
is a contraction: writing $L_b := \eta_b A_g \sigma_i^g(X) + \mu\epsilon$ for the linear relaxation rate, the Lipschitz constant of the deterministic part is $|1 - L_b| < 1$ under the step-size condition $L_b \in (0, 2)$. By the Banach fixed-point theorem, $T_X$ admits a unique fixed point in closed form:
\begin{equation}
b^*(X; \theta_i^{g}) = \frac{\eta_b\, A_g\, \sigma_i^{g}(X)}{\eta_b\, A_g\, \sigma_i^{g}(X) + \mu\, \epsilon},
\label{eq:bstar}
\end{equation}
or, defining the \emph{operational awareness}
\begin{equation}
\hat\sigma_i^{g}(X) := c_g\, \sigma_i^{g}(X), \qquad c_g := \frac{1}{1 + \rho_g}, \qquad \rho_g := \frac{\mu\, \epsilon}{\eta_b\, A_g},
\label{eq:hat-sigma}
\end{equation}
under the sharp-threshold assumption (equation \ref{eq:sigma}), the fixed point reduces to
\begin{equation}
b^*(X; \theta_i^{g}) = \hat\sigma_i^{g}(X).
\label{eq:bstar-step}
\end{equation}
{\sloppy
The \emph{dimensionless calibration-update scale} $\rho_g$ measures dissonance against attunement. The \emph{amplitude-rescaling factor} $c_g \in (0, 1]$ rescales raw awareness to operational awareness. The operational awareness $\hat\sigma$ is what the wisher's calibration actually saturates to at full $\sigma$; below threshold, both are zero.\par}

\begin{assumption}[Model dynamics: noise that preserves the boundary and fixes the stationary mean]
\label{assumption:boundary-preserving}
The per-interaction noise law $\omega_t$ in \Cref{eq:2a} is such that the fast-chain Markov kernel on $\mathcal{B}$ admits a unique stationary distribution $\nu(\cdot \mid X, \theta)$ satisfying
\[
\E_{\nu(\cdot \mid X, \theta)}[b] \;=\; b^*(X; \theta),
\]
with $b^*$ the deterministic fixed point of \Cref{eq:bstar}. Equivalently, the kernel preserves $\mathcal{B}$ and its stationary mean coincides with the noise-averaged fixed point. This holds automatically for the unconstrained affine chain with additive centered noise; on a bounded $\mathcal{B}$ it is a property of the chosen noise law (e.g.\ a noise kernel whose support contracts near $\partial\mathcal{B}$, or a reflection rule whose boundary bias cancels the truncation correction).
\end{assumption}

The map $T_X$ is the \emph{noise-averaged} update. The stochastic chain \Cref{eq:2a} fluctuates around $b^*$ under $\omega_t$. For the unconstrained affine chain, additive centered noise leaves the stationary mean exactly at $b^*$. On a bounded $\mathcal{B}$ this need not hold: enforcing $b_i \in \mathcal{B}$ by reflection or truncation can bias the stationary mean off $b^*$ near $\partial\mathcal{B}$. \Cref{assumption:boundary-preserving} rules this out by requiring the fast-chain kernel to preserve $\mathcal{B}$ and to fix its stationary mean at $b^*(X; \theta)$. The exactness of the Lifting Theorem (\Cref{thm:abc-preserved}) rests on this identity.

\subsection{Per-wisher perceived utility}
The wisher's perceived utility $u_{\mathrm{perc}, i}^g$ from binding to genie $g$ decomposes into three layers: actual realized utility from the wish, altruistic utility from its axis-weighted internalization of binding $g$'s welfare (\Cref{sec:welfare-utility}), and perception biases in $b_i$.

\paragraph{Actual and altruistic utility.}
The first two layers are the realized private payoffs $u_{\mathrm{actual}}^{\GOne}, u_{\mathrm{actual}}^{\GTwo}$ under each binding (the latter net of deliberation cost, with $u_{\mathrm{actual}}^{\GTwo} - u_{\mathrm{actual}}^{\GOne} \leq 0$ since $\GTwo$ never exceeds the as-asked benefit the wisher sought), and the altruistic components $\ualt{i}^g := \altvec_i \cdot \mathbf{w}_g$ of \Cref{sec:welfare-utility} (\Cref{eq:ualt-def}), with audit-driven gain $\ualt{i}^{\GTwo} - \ualt{i}^{\GOne} = \altvec_i \cdot \dHvec_w$.

\paragraph{Perception biases.}
The calibration state $b_i$ enters as a perception-bias modulator: $-\Bcap(1 - b_i)$ for $\GTwo$ (under-perception that direct experience closes as $b_i \to 1$) and $+\Mcap - \Aa\,b_i$ for $\GOne$ (the overstatement built into its structural deceit, eroded by peer testimony and capped at $\Mcap$). The amplitudes $\Bcap := A_{\GTwo}$ and $\Aa := A_{\GOne}$ are the same per-binding attunement amplitudes $A_g$ that scale $R$ in \Cref{eq:2a}: a single coefficient governs how strongly binding $g$ moves the wisher's calibration state, appearing at the rate level in the attunement response and at the perceived-utility level in the per-binding bias modulator. Combining yields:
\begin{align}
u_{\mathrm{perc}, i}^{\GTwo}(b_i) &= u_{\mathrm{actual}}^{\GTwo} + \ualt{i}^{\GTwo} - \Bcap \cdot (1 - b_i), \label{eq:uperc-G2}\\
u_{\mathrm{perc}, i}^{\GOne}(b_i) &= u_{\mathrm{actual}}^{\GOne} + \ualt{i}^{\GOne} + \Mcap - \Aa \cdot b_i. \label{eq:uperc-G1}
\end{align}
The per-wish stochasticity in $u_{\mathrm{actual}}^g$ and in $\dHvec_w$ (and so in $\ualt{i}^g$ at the wish level) is absorbed into the same conditional mean as the per-interaction stochasticity $\omega_t$ of \Cref{eq:2a}: the deterministic forms above are conditional means over the wish distribution.

For consistency with the setup, we require
\begin{align*}
u_{\mathrm{perc}}^{\GTwo}(b) &\leq u_{\mathrm{actual}}^{\GTwo} + \ualt{i}^{\GTwo}, & u_{\mathrm{perc}}^{\GOne}(b) &\geq u_{\mathrm{actual}}^{\GOne} + \ualt{i}^{\GOne}, \\
\Aa &\leq \Mcap, & u_{\mathrm{actual}}^{\GTwo},\, u_{\mathrm{actual}}^{\GOne},\, \Mcap &\geq 0.
\end{align*}
The private difference $u_{\mathrm{actual}}^{\GTwo} - u_{\mathrm{actual}}^{\GOne} \leq 0$, while the altruistic difference $\ualt{i}^{\GTwo} - \ualt{i}^{\GOne}$ can take either sign; the endpoint-low, endpoint-high, and CSP inequalities below tighten their combination into the joint constraint that gives the cooperative basin.

Substituting $b_i = b^*_i = \hat\sigma_i^{g}(X)$ from \Cref{eq:bstar-step}:
\begin{align}
u_{\mathrm{perc}, i}^{\GTwo}(X; \theta_i^{\GTwo}) &= u_{\mathrm{actual}}^{\GTwo} + \ualt{i}^{\GTwo} - \Bcap \cdot (1 - \hat\sigma_i^{\GTwo}(X)), \label{eq:individual-G2}\\
u_{\mathrm{perc}, i}^{\GOne}(X; \theta_i^{\GOne}) &= u_{\mathrm{actual}}^{\GOne} + \ualt{i}^{\GOne} + \Mcap - \Aa \cdot \hat\sigma_i^{\GOne}(X). \label{eq:individual-G1}
\end{align}
Under the sharp threshold limit (\Cref{eq:sigma}), with $c_{\GTwo}, c_{\GOne}$ the amplitude-rescaling factors $c_g$ of \Cref{eq:hat-sigma} at $g = \GTwo, \GOne$:
\begin{itemize}
\item At $X < \theta_i^{g}$: full under-perception $-\Bcap$ for $\GTwo$ and the full structural-deceit cap $+\Mcap$ for $\GOne$.
\item At $X \geq \theta_i^{g}$: residual under-perception $-\Bcap(1 - c_{\GTwo})$ for $\GTwo$; residual deceit $\Mcap - \Aa c_{\GOne}$ for $\GOne$.
\end{itemize}

\section{Accessibility and Favoredness of Cooperative Basins}
\label{sec:community-dynamics}

The wisher-level chain closes on a per-wisher fixed point and a payoff differential that depends on the community only through the binding fraction $X$. The institutional question then steps up a level. Does the audit-grounded paradigm's advantage survive aggregation across a heterogeneous community, and does it survive before the community's resource pool is exhausted? The Moran--Fermi machinery below answers the first question through a ratio identity on the fixation probabilities. The basin-accessibility analysis answers the second through a finite achievability window.

The previous section characterized a single wisher: its calibration state $b_i$ and its perceived utility $u_{\mathrm{perc}, i}^g$ for each binding. A community consists of $\Nc$ such wishers, each with its own awareness threshold $\theta_i^g$ drawn from a population prior and its own calibration state evolving under the same dynamics. The community-level questions are different in kind: which binding does the community converge to in the long run, and on what timescale? The remainder of this section sets up the aggregation from per-wisher quantities to community-level objects, and takes these two questions up in turn.

\subsection{Utility perception}
The per-wisher perceived utility $u_{\mathrm{perc}, i}^g$ carries three sources of wisher-level variability: the calibration state $b_i$, which fluctuates around its fixed point $b^*(X)$ under per-interaction noise; the awareness threshold $\theta_i^{g}$; and the altruism vector $\altvec_i$ that sets its per-axis weight on the audit's harm-reduction.

Wisher awareness thresholds $\theta_i^{g}$ are sampled i.i.d.\ from a heavy-tailed population prior with tail $\Ftbari(X) := \Prob(\theta_i^{g} > X)$. The chain below loads on two properties of the tail: it is monotone non-increasing on $[0, 1]$ (which delivers the monotonicity condition of \Cref{def:basin-conditions}), and it is bounded Lipschitz (equivalently, the prior has bounded density), which delivers continuity of $\Fdelta$ and the averaging closure used in the basin-accessibility subsection. Heavy-tailedness is a modeling choice in the \citet{granovetter_1978_threshold} threshold-heterogeneity tradition; the Hill family summarized in \Cref{sec:priors-shape} is one tractable parametric instance.

Altruism vectors $\altvec_i \in [0, 1]^4$ are sampled i.i.d.\ from a population prior with mean $\altbar := \E_i[\altvec_i] \in [0, 1]^4$. We assume $\altvec_i$, $\theta_i^g$, and the wish distribution are mutually independent across wishers, so that the population-averaged altruistic utility decomposes as
\begin{equation}
\Wagg \;=\; \E_i[\ualt{i}^{\GTwo} - \ualt{i}^{\GOne}] \;=\; \altbar \cdot \dHbar.
\label{eq:ualt-bar}
\end{equation}

The community-level payoff averages over both,
\begin{equation}
\bar U_g(X) := \E_{\altvec, \theta}\!\Bigl[\E_{\nu(\cdot \mid X, \theta)}\bigl[u_{\mathrm{perc}, i}^g(X; \theta_i^{g}, \altvec_i)\bigr]\Bigr],
\label{eq:Fbar-def}
\end{equation}
where $\nu(\cdot \mid X, \theta)$ is the stationary distribution of $b_i$ under the fast chain at fixed $X, \theta$.

Under the bounded-Lipschitz hypotheses and the step-size condition
\begin{equation}
0 < \eta_b\, A_g\, \sigma_i^{g}(X) + \mu\, \epsilon < 2,
\end{equation}
the deterministic part of the fast calibration update is contracting at fixed $(X, \theta)$. Under the assumption of \Cref{sec:wisher-dynamics} that the noise preserves the boundary and fixes the stationary mean (automatic only for the unconstrained affine chain), the stationary mean of the fast chain sits at the deterministic fixed point
\[
\E_{\nu(\cdot \mid X, \theta)}[b] = b^*(X; \theta).
\]
Since the perceived utilities of \Cref{eq:uperc-G2,eq:uperc-G1} are affine in $b$, fast-chain averaging introduces no payoff-level correction:
\[
\E_{\nu(\cdot \mid X, \theta)}\bigl[u_{\mathrm{perc}}^g(b)\bigr] = u_{\mathrm{perc}}^g\bigl(b^*(X; \theta)\bigr).
\]
The slow chain therefore closes on the averaged payoffs $\bar U_g$ exactly under the stated affine-update and boundary-preserving-noise assumptions. Per-wish stochasticity in $\dHvec_w$ and in the per-genie $u_{\mathrm{actual}}^g$ is absorbed into the same conditional means.

Define the \emph{population-averaged operational awareness}
\begin{equation}
\bar\sigma_g(X) := \E_\theta[\hat\sigma_i^{g}(X)] = c_g \bigl(1 - \Ftbari(X)\bigr),
\label{eq:sigbar}
\end{equation}
The individual sharp gate $\sigma_i^{g}(X) = \1[X \geq \theta_i^{g}]$ is discontinuous in $X$. Averaging over the threshold prior smooths it. When the prior has bounded density, the tail $\Ftbari$ is Lipschitz, so $\bar\sigma_g(X) = c_g(1 - \Ftbari(X))$ is Lipschitz on $[0, 1]$ with constant set by $c_g$ and the density bound. This averaging (not the bounded-Lipschitz-in-$b_i$ condition on $R, D$ of \Cref{sec:wisher-dynamics}, which governs the fast chain) is what makes $\Fdelta$ continuous and Lipschitz in $X$.

Substituting the per-wisher forms \Cref{eq:individual-G2,eq:individual-G1} and averaging over the threshold and altruism priors, with per-binding social welfare $W^g := \altbar \cdot \mathbf{w}_g$:
\begin{align}
\bar U_{\GTwo}(X) &= u_{\mathrm{actual}}^{\GTwo} + W^{\GTwo} - \Bcap \cdot \bigl(1 - \bar\sigma_{\GTwo}(X)\bigr), \label{eq:uG2}\\
\bar U_{\GOne}(X) &= u_{\mathrm{actual}}^{\GOne} + W^{\GOne} + \Mcap - \Aa \cdot \bar\sigma_{\GOne}(X). \label{eq:uG1}
\end{align}
This is the same structural form as the per-wisher coupling \Cref{eq:uperc-G2,eq:uperc-G1}, with the population-averaged operational awareness $\bar\sigma_g$ in place of $b_i$ and per-binding social welfare $W^g$ in place of the wisher-level $\ualt{i}^g$.

From \Cref{eq:uG2,eq:uG1}, the population-game payoff differential is
\begin{equation}
\Fdelta(X) := \bar U_{\GTwo}(X) - \bar U_{\GOne}(X) = (u_{\mathrm{actual}}^{\GTwo} - u_{\mathrm{actual}}^{\GOne}) + \Wagg - \Bcap - \Mcap + \Bcap\, \bar\sigma_{\GTwo}(X) + \Aa\, \bar\sigma_{\GOne}(X),
\label{eq:3}
\end{equation}
where $\Wagg := W^{\GTwo} - W^{\GOne} = \altbar \cdot \dHbar$ is the social welfare aggregate (\Cref{eq:Wagg-def}). 

\subsection{Ratio favoredness}
\label{ssec:favoredness-bias}
Here we ask when $\GTwo$ outcompetes $\GOne$ in the long run. The community-level dynamic is pairwise comparison: wishers occasionally compare their perceived utilities and switch bindings, biased toward whichever genie offers the higher perceived payoff. Following the standard Moran--Fermi pairwise-comparison protocol \citep{traulsen_nowak_pacheco_2006_stochastic}, a wisher facing a differently-bound reference adopts the reference's binding with the Fermi probability $\sigma_\beta(d) = 1/(1 + e^{-\beta d})$, where $d$ is the perceived-payoff difference in its favor and $\beta = 1/\eta_s$ is the selection intensity.

\begin{assumption}[Noise-scale separation and identification]
\label{assumption:noise-scale}
The per-interaction cognitive-update noise scale $\xi$ and the calibration-update step size $\eta_b$ of \Cref{eq:1} satisfy the separation
\[
  \xi \;\ll\; \eta_b,
\]
so that the fast chain on $b$ equilibrates between binding-comparison events and the slow chain on $g$ makes its switches against time-averaged perceived utilities. The logit-noise temperature $\eta_s$ of the binding-comparison channel is identified with the calibration-update step size,
\[
  \eta_s \;=\; \eta_b \qquad \bigl(\text{equivalently } \beta = 1/\eta_b\bigr),
\]
rather than left as a free parameter. A first-principles singular-perturbation derivation of this inheritance, pinning the comparison noise channel to the fast chain's stationary fluctuations on $b$ under explicit decision-noise assumptions, is left open.
\end{assumption}

With $\Fdelta$ as the perceived-payoff differential, this results in the switch-rate ratio at community state $j$ (the number of $\GTwo$-bound wishers), in terms of the up- and down-switch probabilities $T^\pm(j)$,
\begin{equation}
\frac{T^-(j)}{T^+(j)} = \exp\!\bigl(-\Fdelta(j/\Nc)/\eta_s\bigr).
\label{eq:5}
\end{equation}
The boundary states $j = 0$ and $j = \Nc$ are absorbing. Writing $\alpha_j := T^-(j)/T^+(j)$ and $P_i := \prod_{j=1}^{i}\alpha_j$ (with $P_0 := 1$), the Karlin--Taylor birth--death identity~\citep[ch.~3]{karlin_taylor_1975_first_course} gives the fixation probability from $k$ $\GTwo$-bound wishers
\begin{equation}
\phi_k = \frac{\sum_{i=0}^{k-1} P_i}{\sum_{i=0}^{\Nc - 1} P_i}.
\label{eq:6}
\end{equation}
The shared denominator cancels in the reverse comparison, giving the fixation-ratio identity
\begin{equation}
\frac{\rho_{\GTwo, \GOne}}{\rho_{\GOne, \GTwo}} = \exp\!\left(\frac{1}{\eta_s} \sum_{j=1}^{\Nc - 1} \Fdelta(j/\Nc)\right).
\label{eq:7}
\end{equation}

\begin{proposition}[Within-community ratio-favoredness; Karlin--Taylor / Traulsen--Nowak--Pacheco]
\label{prop:ratio-favored}
For a community of $\Nc$ wishers under the Moran--Fermi pairwise-comparison dynamic at any selection intensity $\beta = 1/\eta_s > 0$,
\[
\frac{\rho_{\GTwo, \GOne}(\Nc, \beta)}{\rho_{\GOne, \GTwo}(\Nc, \beta)} > 1 \quad \Longleftrightarrow \quad \sum_{j=1}^{\Nc - 1} \Fdelta(j/\Nc) > 0.
\]
\end{proposition}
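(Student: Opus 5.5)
The argument is a direct computation on the birth--death chain, in three steps: specify the transition probabilities, derive the fixation-ratio identity \eqref{eq:7}, and read off the sign.

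\emph{Transition probabilities.} Under the pairwise-comparison protocol with Fermi imitation, at state $j$ with $1 \le j \le \Nc-1$ a focal wisher and a reference of the other binding are drawn with the same combinatorial weight $\frac{j}{\Nc}\frac{\Nc-j}{\Nc}$ (up to the sampling convention) in both directions. This gives
\[
T^{+}(j) = w_j\,\sigma_\beta\bigl(\Fdelta(j/\Nc)\bigr), \qquad T^{-}(j) = w_j\,\sigma_\beta\bigl(-\Fdelta(j/\Nc)\bigr),
\]
with a common factor $w_j>0$. The logistic identity $\sigma_\beta(-d)/\sigma_\beta(d)=e^{-\beta d}$ then yields \eqref{eq:5}, namely $\alpha_j = e^{-\beta \Fdelta(j/\Nc)}$. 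This is finite and positive, so every interior state is transient and both boundaries are absorbing.

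\emph{Fixation probabilities.} Solve the first-step recursion $\phi_k = T^-(k)\phi_{k-1} + T^+(k)\phi_{k+1} + (1-T^+(k)-T^-(k))\phi_k$ with $\phi_0=0$ and $\phi_{\Nc}=1$. The differences satisfy $\phi_{k+1}-\phi_k = \alpha_k(\phi_k-\phi_{k-1})$, so they telescope to \eqref{eq:6}, the Karlin--Taylor formula. This gives
\[
\rho_{\GTwo,\GOne} = \phi_1 = \frac{1}{\sum_{i=0}^{\Nc-1} P_i}.
\]
For the reverse direction, $\rho_{\GOne,\GTwo}$ is the probability of absorption at $0$ from state $\Nc-1$:
\[
\rho_{\GOne,\GTwo} = 1-\phi_{\Nc-1} = \frac{P_{\Nc-1}}{\sum_{i=0}^{\Nc-1} P_i}.
\]
The common denominator cancels, leaving
\[
\frac{\rho_{\GTwo,\GOne}}{\rho_{\GOne,\GTwo}} = \frac{1}{P_{\Nc-1}} = \exp\Bigl(\beta\sum_{j=1}^{\Nc-1}\Fdelta(j/\Nc)\Bigr),
\]
which is \eqref{eq:7} with $\beta = 1/\eta_s$.

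\emph{Sign.} Because $\exp$ is strictly increasing with $\exp(0)=1$, and $\beta>0$, the ratio exceeds $1$ if and only if the exponent is positive, that is, if and only if $\sum_{j=1}^{\Nc-1}\Fdelta(j/\Nc) > 0$. The equivalence holds for every $\beta>0$, since $\beta$ only rescales the exponent and cannot change its sign.

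\emph{Main obstacle.} The only delicate point is justifying that $T^\pm$ are state-dependent only through $\Fdelta(j/\Nc)$ with a \emph{symmetric} prefactor $w_j$. Everything else is standard. This requires that the slow chain sees the averaged payoffs $\bar U_g$ and not per-wisher realizations, which is exactly the closure established above from \Cref{assumption:boundary-preserving} and \Cref{assumption:noise-scale} together with the affinity of $u_{\mathrm{perc}}^g$ in $b$. I would state this explicitly as the modeling input and note that the self-exclusion convention (using $j/\Nc$ rather than $(j\pm1)/(\Nc-1)$) affects only the evaluation points, not the argument.
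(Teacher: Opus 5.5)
Your argument is correct and follows the paper's route. The paper's proof reads the sign off the fixation-ratio identity \eqref{eq:7}, which it obtains from the switch-rate ratio \eqref{eq:5} and the Karlin--Taylor formula \eqref{eq:6} by cancelling the shared denominator; you correctly fill in the steps the paper leaves implicit (the logistic identity giving \eqref{eq:5}, the telescoping derivation of \eqref{eq:6}, and $\rho_{\GOne,\GTwo} = 1-\phi_{\Nc-1} = P_{\Nc-1}/\sum_{i=0}^{\Nc-1}P_i$).
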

\begin{proof}
Immediate from \Cref{eq:7}, since the exponential exceeds $1$ exactly when its argument is positive.
\end{proof}

The shape of $\Fdelta$ across $[0, 1]$ is therefore what determines whether $\GTwo$ wins. The remainder of this subsection gives structural conditions on $\Fdelta$ that make the cumulative sum positive, and a lifting theorem that transfers those conditions through the averaging step.

\subsubsection{Basin existence conditions}
The cumulative sum is strictly positive whenever $\Fdelta$ satisfies the structural conditions named below. We write $u_w := (u_{\mathrm{actual}}^{\GTwo} - u_{\mathrm{actual}}^{\GOne}) + \Wagg$ for the combined $X$-independent welfare-and-utility constant: the audit's net private-utility effect plus the population-averaged altruistic utility.

\begin{definition}[Basin existence conditions]
\label{def:basin-conditions}
$\Fdelta$ satisfies the \emph{basin existence conditions} when all three of the following hold.

\textbf{Monotonicity.} $\Fdelta$ is non-decreasing on $[0, 1]$, because $\bar\sigma_g(X) = c_g(1 - \Ftbari(X))$ is non-decreasing in $X$ ($\Ftbari$ monotone non-increasing by hypothesis).

\textbf{Endpoint inversion.} At $\Ftbari(0) = 1$, $\bar\sigma_g(0) = 0$, so $\Fdelta(0) < 0$ requires $\Bcap + \Mcap > u_w$ (endpoint-low). At $\Ftbari(1) =: \sigma_g^{\mathrm{sat}}$, $\bar\sigma_g(1) = c_g(1 - \sigma_g^{\mathrm{sat}})$, so $\Fdelta(1) > 0$ requires $u_w + \Aa \bar\sigma_{\GOne}(1) > \Bcap(1 - \bar\sigma_{\GTwo}(1)) + \Mcap$ (endpoint-high). Both are constraint inequalities on the controls.

\textbf{Centro-symmetric pairing (CSP).} $\Fdelta(X) + \Fdelta(1-X) \geq 0$ on $[0, 1/2]$ becomes
\[
2(u_w - \Mcap) \;\geq\; \max_{X \in [0, 1/2]}\bigl[\Bcap (2 - \bar\sigma_{\GTwo}(X) - \bar\sigma_{\GTwo}(1-X)) - \Aa (\bar\sigma_{\GOne}(X) + \bar\sigma_{\GOne}(1-X))\bigr].
\]
\end{definition}

The boundary values in endpoint inversion are prior-dependent. Generic priors with density on $(0, \infty)$ give $\Ftbari(0) = 1$ (no community coverage means no awareness). Compact-support priors give $\Ftbari(1) = 0$; heavy-tailed priors on $[0, \infty)$ give $\Ftbari(1) > 0$. Endpoint-low and endpoint-high take their forms by substituting whatever values the prior produces.

\begin{proposition}[Sufficiency of the favoredness conditions]
\label{prop:favoredness-sufficient}
If $\Fdelta$ satisfies the basin existence conditions of \Cref{def:basin-conditions} with the CSP inequality strict for at least one $X \in [0, 1/2]$, then for $\Nc$ above a threshold determined by the Lipschitz constant of $\Fdelta$ and the strict-CSP margin,
\[
\sum_{j=1}^{\Nc - 1} \Fdelta(j/\Nc) > 0,
\]
so $\Fdelta$ is within-community ratio-favored in the sense of \Cref{prop:ratio-favored}.
\end{proposition}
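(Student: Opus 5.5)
The plan is to pair the grid points symmetrically and reduce the discrete sum to the CSP condition. The grid $\{j/\Nc : 1 \le j \le \Nc-1\}$ is invariant under $X \mapsto 1-X$, since $j \mapsto \Nc - j$ is a bijection of $\{1,\dots,\Nc-1\}$. Writing $S_\Nc := \sum_{j=1}^{\Nc-1}\Fdelta(j/\Nc)$ and $G(X) := \Fdelta(X) + \Fdelta(1-X)$, we get
\[
2 S_\Nc \;=\; \sum_{j=1}^{\Nc-1} G(j/\Nc).
\]
By CSP, $G \ge 0$ on $[0,1/2]$. Because $G$ is symmetric about $1/2$, this gives $G \ge 0$ on all of $[0,1]$. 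So every term of the paired sum is non-negative, and $S_\Nc \ge 0$ already holds for every $\Nc$ with no extra work. The whole task is then to upgrade this to strict positivity.

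For strictness, I use the hypothesis that CSP is strict at some $X_0 \in [0,1/2]$, so $G(X_0) =: 2m > 0$. As noted after \Cref{eq:sigbar}, $\Fdelta$ is Lipschitz on $[0,1]$ with some constant $L$ when the prior has bounded density. Then $G$ is $2L$-Lipschitz, and $G > m$ on the interval $I := \{X \in [0,1] : |X - X_0| < m/(2L)\}$. This interval has length at least $\min\{m/(2L), \ldots\}$ after intersecting with $[0,1]$. Once $\Nc$ is large enough that the grid spacing $1/\Nc$ is smaller than that length, $I$ contains a grid point $j_0/\Nc$. I also need $1 \le j_0 \le \Nc-1$.

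The one delicate case is $X_0$ at or near an endpoint. The worst case is $X_0 = 0$, where $G(0) = \Fdelta(0) + \Fdelta(1)$ and the grid excludes $j=0$. The Lipschitz ball still reaches into the interior, so choosing $\Nc > 2\cdot 2L/m$ (say) guarantees an interior grid point within distance $m/(2L)$ of $X_0$. This threshold depends only on $L$ and the strict-CSP margin $m$, matching the statement.

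Combining the two steps gives $2S_\Nc \ge G(j_0/\Nc) > m > 0$, since all other terms are $\ge 0$. Hence $S_\Nc > 0$, and \Cref{prop:ratio-favored} yields ratio-favoredness. Monotonicity and endpoint inversion are not needed for this inequality; they matter only for the basin-boundary structure later.

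I expect the main obstacle to be making sure $\Fdelta$ is genuinely continuous, so that the Lipschitz transfer from the single point $X_0$ to a nearby grid point is legitimate. This fails for a sharp-threshold prior with atoms, so I would state explicitly that the bounded-density hypothesis on $\Ftbari$ is being invoked. The endpoint bookkeeping for $X_0 \in \{0, 1/2\}$ is routine.
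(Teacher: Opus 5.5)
Your proof is correct and follows the paper's own argument. You pair $j$ with $\Nc-j$ so that CSP makes every pair non-negative, then use continuity of $\Fdelta$ at the strict-CSP point $X_0$ to get a neighborhood in which some grid node must land once $1/\Nc$ is below its width. Two refinements are worth noting: the doubled-sum identity $2S_{\Nc}=\sum_j G(j/\Nc)$ removes the paper's separate even-$\Nc$ middle-term case, and the Lipschitz radius $m/(2L)$ makes the threshold explicit in $L$ and the strict-CSP margin, as the statement advertises, whereas the paper only gives $\Nc>1/w_*$ for an unspecified continuity-neighborhood width $w_*$.
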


\begin{proof}[Sketch]
Monotonicity together with CSP gives a strict-positive interval of centro-symmetric pair-sums on $[0, 1/2]$. For $\Nc$ exceeding the inverse width of that interval, at least one grid point lands inside, and the remaining pairs contribute non-negatively. Full derivation in \Cref{ssec:proof-favoredness-sufficient}.
\end{proof}

The strict clause excludes only the knife-edge case in which every centro-symmetric pair sum $\Fdelta(X) + \Fdelta(1 - X)$ vanishes on $[0, 1/2]$. \Cref{prop:favoredness-sufficient} certifies favoredness whenever these conditions hold, with the favorable region non-empty below $X = 1/2$, but leaves its lower edge $\chistar$ unlocated. Assuming the same conditions, the following theorem locates it.

\begin{theorem}[Basin boundary]
\label{thm:basin-boundary}
Write $\bar\sigma_g = c_g(1 - \Ftbar{g})$. Suppose the threshold tails $\Ftbar{\GTwo}, \Ftbar{\GOne}$ are monotone non-increasing, so that $\Fdelta$ is non-decreasing on $[0, 1]$ (monotonicity, automatic from the non-increasing tails and the non-negativity of $\Bcap, \Aa$), and the controls satisfy the three boundary conditions
\begin{align}
\text{endpoint-low:} \; & u_w - \Bcap - \Mcap < 0, \label{eq:basin-B-low}\\
\text{endpoint-high:} \; & u_w + \Aa\,\bar\sigma_{\GOne}(1) - \Bcap\,(1 - \bar\sigma_{\GTwo}(1)) - \Mcap > 0, \label{eq:basin-B-high}\\
\text{CSP:} \; & 2(u_w - \Bcap - \Mcap) + \Bcap\bigl[\bar\sigma_{\GTwo}(X) + \bar\sigma_{\GTwo}(1 - X)\bigr] + \Aa\bigl[\bar\sigma_{\GOne}(X) + \bar\sigma_{\GOne}(1 - X)\bigr] \geq 0, \label{eq:basin-C}
\end{align}
the centro-symmetric pairing (CSP) inequality holding for all $X \in [0, 1/2]$ and strict for at least one. Then endpoint inversion \eqref{eq:basin-B-low}--\eqref{eq:basin-B-high} together with monotonicity makes the cooperative basin $\{X : \Fdelta(X) \geq 0\}$ the interval $[\chistar, 1]$ with lower edge $\chistar := \inf\{X : \Fdelta(X) \geq 0\} \in (0, 1)$, and setting $\Fdelta(\chistar) = 0$ in \Cref{eq:3} the boundary $\chistar$ is the root of
\begin{equation}
\Bcap\, c_{\GTwo}\,\Ftbar{\GTwo}(\chistar) + \Aa\, c_{\GOne}\,\Ftbar{\GOne}(\chistar) \;=\; u_w - \Bcap - \Mcap + \Bcap\, c_{\GTwo} + \Aa\, c_{\GOne}.
\label{eq:basin-boundary}
\end{equation}
This boundary equation is the general form: it is implicit in the two threshold tails $\Ftbar{\GTwo}, \Ftbar{\GOne}$ and is not closed in, or specialized to, any threshold family. CSP is what makes the cumulative sum of \Cref{prop:favoredness-sufficient} positive, transferred from the per-wisher differential to $\Fdelta$ by the Lifting Theorem (\Cref{thm:abc-preserved}). It is not derivable from \eqref{eq:basin-B-low}--\eqref{eq:basin-B-high} and is carried as a standing hypothesis.
\end{theorem}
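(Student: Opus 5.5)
The proof reduces everything to the explicit form \eqref{eq:3} of $\Fdelta$ together with $\bar\sigma_g = c_g(1-\Ftbar{g})$. Substituting, we get
\[
\Fdelta(X) = u_w - \Bcap - \Mcap + \Bcap c_{\GTwo} + \Aa c_{\GOne} - \Bcap c_{\GTwo}\Ftbar{\GTwo}(X) - \Aa c_{\GOne}\Ftbar{\GOne}(X).
\]
Since $\Bcap, \Aa \ge 0$, $c_g \in (0,1]$, and both tails are non-increasing, each subtracted term is non-increasing in $X$. Hence $\Fdelta$ is non-decreasing on $[0,1]$. This disposes of monotonicity directly, without appeal to \Cref{def:basin-conditions}.

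Next, endpoint inversion. Using $\Ftbar{g}(0)=1$, we have $\bar\sigma_g(0)=0$ and $\Fdelta(0) = u_w - \Bcap - \Mcap < 0$ by \eqref{eq:basin-B-low}. Evaluating at $X=1$, $\Fdelta(1)$ is exactly the left side of \eqref{eq:basin-B-high}, so $\Fdelta(1)>0$. Monotonicity now makes the superlevel set $\{X:\Fdelta(X)\ge 0\}$ an up-set in $[0,1]$, hence an interval with right endpoint $1$. It is nonempty because it contains $1$ and excludes $0$, so $\chistar := \inf\{X : \Fdelta(X)\ge 0\}$ lies in $[0,1]$.

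The main obstacle is showing that $\chistar \in (0,1)$, that the basin is closed at $\chistar$, and that $\Fdelta(\chistar)=0$. Here I will use the bounded-density/Lipschitz hypothesis on the tails stated at the start of \Cref{sec:community-dynamics}. Under it, $\bar\sigma_g$ and hence $\Fdelta$ are continuous on $[0,1]$.
\begin{itemize}
\item Continuity at $0$ together with $\Fdelta(0)<0$ gives a neighborhood $[0,\delta)$ on which $\Fdelta<0$, so $\chistar \ge \delta > 0$.
\item Continuity at $1$ together with $\Fdelta(1)>0$ gives $\chistar<1$.
\item Continuity also makes the superlevel set closed, so the basin equals $[\chistar,1]$.
\item By the intermediate value theorem, approaching $\chistar$ from the left (where $\Fdelta<0$) and from the right (where $\Fdelta\ge 0$) forces $\Fdelta(\chistar)=0$.
\end{itemize}
Without continuity, a jump in $\Ftbar{g}$ could make $\Fdelta$ skip zero, and \eqref{eq:basin-boundary} would only hold as a generalized inverse. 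So I will state the Lipschitz tail assumption explicitly at this step. Setting $\Fdelta(\chistar)=0$ in the displayed form and moving the tail terms to the left-hand side yields \eqref{eq:basin-boundary} verbatim.

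Finally, the CSP hypothesis \eqref{eq:basin-C} is not used to locate $\chistar$. Its role is to guarantee, via \Cref{prop:favoredness-sufficient}, that the cumulative sum is positive, so the basin is ratio-favored in the sense of \Cref{prop:ratio-favored}. To check that the hypothesis has the right form, I will rewrite $\Fdelta(X)+\Fdelta(1-X)$ using \eqref{eq:3}. This gives exactly the left side of \eqref{eq:basin-C}, so CSP as stated coincides with the pairing condition of \Cref{def:basin-conditions}. The transfer from the per-wisher differential to $\Fdelta$ is cited to \Cref{thm:abc-preserved}.

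To show CSP is independent of \eqref{eq:basin-B-low}--\eqref{eq:basin-B-high}, I will give a brief counterexample. Take tails that stay near $1$ until close to $X=1$. Then \eqref{eq:basin-B-high} can still hold while the pair sums at $X$ near $0$ are close to $2(u_w-\Bcap-\Mcap)<0$, so CSP fails.
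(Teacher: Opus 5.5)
Your proposal is correct and follows the paper's own argument, which the paper gives inline in the theorem statement rather than in a separate proof: substitute $\bar\sigma_g = c_g(1-\Ftbar{g})$ into \Cref{eq:3}, read off monotonicity and the two endpoint signs, and set $\Fdelta(\chistar)=0$ to obtain \eqref{eq:basin-boundary}, with CSP serving only ratio-favoredness via \Cref{prop:favoredness-sufficient}. Your explicit appeal to continuity of $\Fdelta$, coming from the bounded-density tails of \Cref{sec:community-dynamics}, fills a step the paper glosses, since monotonicity and endpoint inversion alone give neither $\chistar\in(0,1)$, nor a closed basin, nor $\Fdelta(\chistar)=0$; your counterexample also backs up the independence of CSP, which the paper only asserts.
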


\subsection{Basin accessibility}
\label{ssec:joint-chain}

Finite populations can spend long periods in metastable mixed states even when one absorbing state is ratio-favored. The analysis in this subsection therefore separates directional selection pressure from operational accessibility. The former concerns fixation bias in the comparison process, while the latter concerns whether the corresponding transition becomes practically reachable before the resource pool is exhausted.

Ratio-favoredness establishes that the basin exists. Whether the community can reach it before the resource pool empties is a separate question. The legibility ledger $L_t$ and the resource pool $S_t$, which the within-community analysis took as background, enter here as state variables in their own right. \Cref{ssec:race} treats the race between ledger accumulation and resource depletion, and \Cref{ssec:fixation} bounds the fixation time on the slow Moran--Fermi chain against that depletion.

\subsubsection{Legibility-vs-resource race}
\label{ssec:race}

Each wisher acts at rate $\lambda_a$. An action by a wisher bound to $\GOne$ contributes $\rho_L > 0$ to the legibility ledger, so the ledger accumulates at rate $\rho_L \Nc (1 - X_t)\lambda_a$ when a fraction $1 - X_t$ of the community is bound to $\GOne$. Every action draws $\bar c > 0$ from the resource pool, so $S_t$ depletes at rate $\bar c \Nc \lambda_a$. Both rates are counted on the same effective $\Nc$ of \Cref{assumption:information-density}. The ledger and the resource pool are drawn by the same participating comparators, so the legibility race below turns on the ratio of $\bar c$ to $\rho_L$ and is independent of $\Nc$. The joint dynamics are
\begin{equation}
\dot L_t = \rho_L \Nc (1 - X_t)\lambda_a, \qquad \dot S_t = -\bar c \Nc \lambda_a,
\label{eq:joint-dynamics}
\end{equation}
with $\rho_L$, $\lambda_a$, and $\bar c$ treated as constants independent of $X, L, S, g$, giving $L_t$ monotone non-decreasing and $S_t$ monotone non-increasing. The subscript $L$ distinguishes the ledger-accumulation rate from the dimensionless cognitive-update scale $\rho_g$ of \Cref{eq:hat-sigma} and from the fixation probabilities $\rho_{\GTwo, \GOne}, \rho_{\GOne, \GTwo}$ of \Cref{eq:6}.

The race condition binds most tightly in the metastable all-$\GOne$ phase where $1 - X_t \approx 1$ and the ledger accumulates at its maximum rate. Approximating $X_t$ as fixed at $X_0 \approx 0$ during this phase, the ledger ODE of \Cref{eq:joint-dynamics} integrates over time to $L_t \approx L_0 + \rho_L \Nc \lambda_a\, t$, and the hitting time $\tau_\Theta := \inf\{t : L_t \geq \Theta\}$ at which the legibility threshold is crossed satisfies
\begin{equation}
\tau_\Theta \approx \frac{\Theta - L_0}{\rho_L \Nc \lambda_a}.
\label{eq:8}
\end{equation}
The resource ODE integrates to $S_t = S_0 - \bar c \Nc \lambda_a\, t$, with crisis time $\tau_{S=0} := \inf\{t : S_t \leq S_{\min}\} = (S_0 - S_{\min})/(\bar c \Nc \lambda_a)$. Legibility wins ($\tau_\Theta < \tau_{S=0}$) iff
\begin{equation}
\bar c (\Theta - L_0) < \rho_L (S_0 - S_{\min}).
\label{eq:9}
\end{equation}
The $X_0 \approx 0$ approximation gives the most permissive form of the race condition: any positive $X_t$ on $[0, \tau_\Theta]$ slows ledger accumulation and tightens \Cref{eq:9}. The simplified inequality is therefore a necessary condition for race-winning. If it fails at $X_0 \approx 0$, the race is lost regardless of the actual $X_t$ trajectory.

\subsubsection{Fixation-time bound}
\label{ssec:fixation}

Because pure imitation locks the $j = 0$ boundary (\Cref{ssec:favoredness-bias}), an all-$\GOne$ community is strictly absorbing and does not transition on its own. The fixation time is defined relative to a small exploration rate $\lambda_{\mathrm{seed}} > 0$ (a spontaneous $\GTwo$ trial) that renders all-$\GOne$ metastable. The expected time for the metastable phase to transit is the inter-seed clock $1/\lambda_{\mathrm{seed}}$ multiplied by the expected number of trials $1/\rho_{\GTwo, \GOne}$ before one survives, with the per-seed survival probability set by the Karlin--Taylor identity of \Cref{prop:ratio-favored}. The barrier height $\cdepth$ at the mixed equilibrium $\chistar$ controls how fast $\rho_{\GTwo, \GOne}$ decays with $\Nc$. The Basin-depth-and-timescale Proposition (\Cref{prop:basin-depth}, proved in \Cref{sec:appendix-proofs}) makes the metastable-fixation identity \eqref{eq:metastable-identity} and the Lean-mechanized log ceiling \eqref{eq:fixation-log-ceiling} explicit.

Fixation must complete before the resource pool empties, $\E[\tau_{\mathrm{metastable}}] \leq \tau_{S=0}$, with $\tau_{S=0} = (S_0 - S_{\min})/(\bar c\,\Nc\,\lambda_a)$ the depletion time of \Cref{ssec:race}. Substituting the finite ceiling \eqref{eq:fixation-log-ceiling} into the structural identity \eqref{eq:metastable-identity} and taking logarithms yields the operational-accessibility constraint
\begin{equation}
\cdepth\, \Nc / \eta_s \;\lesssim\; \ln(\tau_{S=0}\, \lambda_{\mathrm{seed}})
\label{eq:10}
\end{equation}
with the polynomial $\sqrt{\Nc}$ and Lipschitz correction terms absorbed into the logarithm. A spontaneous trial is rarer than an ordinary action, so $\lambda_{\mathrm{seed}} \leq \lambda_a$: the ledger-vs-resource race of \Cref{ssec:race} runs at the full action rate $\lambda_a$, while the transition clock runs at the slower $\lambda_{\mathrm{seed}}$.

\begin{theorem}[Basin attainability]
\label{thm:achievability}
Provided the legibility race of \Cref{ssec:race} is won on the realized $X_t$ trajectory (for which \Cref{eq:9} is a necessary screen), $\chistar < 1/2$, and the threshold prior has positive density at $\chistar$ (so $\Fdelta$ crosses its root strictly, the threshold-root condition every family of \Cref{sec:priors-shape} meets), a community of size $\Nc$ is both ratio-favored for $\GTwo$ and able to reach the cooperative basin before the pool empties under two size conditions. The first is the favoredness floor. Sharpening \Cref{prop:favoredness-sufficient} at the tipping point $\chistar < 1/2$, the strict crossing makes $\Fdelta$ strictly positive on $(\chistar, 1/2)$ and CSP makes every reflected grid pair non-negative, so a grid node lands in that interval and contributes a strictly positive pair once
\begin{equation}
\Nc \;>\; \frac{1}{\tfrac12 - \chistar},
\label{eq:nc-floor}
\end{equation}
whereupon $\sum_{j=1}^{\Nc-1}\Fdelta(j/\Nc) > 0$ and the community is ratio-favored (\Cref{prop:ratio-favored}). The second is the accessibility ceiling, the finite (no-asymptotics) barrier budget
\begin{equation}
\frac{\cdepth\,\Nc}{\eta_s} + \frac{L}{\eta_s} + \ln(\Nc-1) \;\le\; \ln\!\bigl(\lambda_{\mathrm{seed}}\,\tau_{S=0}\bigr),
\label{eq:achievability-finite}
\end{equation}
where $L$ is the Lipschitz constant of $\Fdelta$ and $\cdepth = \cdepth(\chistar)$ is the maximal running barrier. This is the machine-checked statement: the barrier ceiling and the favored-and-accessible capstone carry no bridge hypothesis, resting on the running-barrier maximum at $\chistar$ and the Lipschitz Riemann bound (\Cref{sec:formalization-scope}). Since $\lambda_{\mathrm{seed}}\,\tau_{S=0} = \lambda_{\mathrm{seed}}(S_0 - S_{\min})/(\bar c\,\lambda_a\,\Nc)$, the linear barrier sets the scale and the $\ln \Nc$ and Lipschitz terms enter as a lower-order correction, giving the window
\begin{equation}
\frac{1}{\tfrac12 - \chistar} \;<\; \Nc \;\lesssim\; \frac{\eta_s}{\cdepth}\,\ln\!\Bigl(\frac{\lambda_{\mathrm{seed}}(S_0 - S_{\min})}{\bar c\,\lambda_a}\Bigr) + O(\ln \Nc).
\label{eq:achievability-window}
\end{equation}
The upper edge is logarithmic in the resource budget, so the exponential barrier lets a finite pool support only logarithmically many comparators. The $O(\ln \Nc)$ term marks the cutoff as soft, so communities near the edge are decided by the exact condition \eqref{eq:achievability-finite} rather than by the leading term.
\end{theorem}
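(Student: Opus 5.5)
The proof splits along the two size conditions: a deterministic Riemann-sum argument for the favoredness floor \eqref{eq:nc-floor}, and a birth--death first-passage estimate for the accessibility ceiling \eqref{eq:achievability-finite}. The window \eqref{eq:achievability-window} then follows by substituting $\tau_{S=0}=(S_0-S_{\min})/(\bar c\,\Nc\,\lambda_a)$ and isolating the linear term.

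\emph{Favoredness floor.} I start from \Cref{thm:basin-boundary}. Monotonicity and endpoint inversion give $\Fdelta<0$ on $[0,\chistar)$ and $\Fdelta\geq 0$ on $[\chistar,1]$. Positive prior density at $\chistar$ makes $\bar\sigma_{\GTwo}$ or $\bar\sigma_{\GOne}$ strictly increasing there. Since $\Fdelta$ is non-decreasing, it is therefore strictly positive on $(\chistar,1/2)$.

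Next I pair grid nodes $j$ and $\Nc-j$ for $j<\Nc/2$, plus the middle node $j=\Nc/2$ when $\Nc$ is even. CSP makes each pair sum $\Fdelta(j/\Nc)+\Fdelta(1-j/\Nc)$ non-negative. At the middle node, CSP evaluated at $X=1/2$ gives $\Fdelta(1/2)\geq 0$.

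If $\Nc(\tfrac12-\chistar)>1$, the open interval $(\chistar\Nc,\Nc/2)$ has length exceeding $1$, so it contains an integer $j$. For that $j$, $\Fdelta(j/\Nc)>0$. Its partner $1-j/\Nc>1/2>\chistar$ also lies in the basin, so $\Fdelta(1-j/\Nc)\geq 0$ and the pair sum is strictly positive. Summing all pairs gives $\sum_j\Fdelta(j/\Nc)>0$, and \Cref{prop:ratio-favored} gives ratio-favoredness.

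\emph{Accessibility ceiling.} I take the structural metastable identity $\E[\tau_{\mathrm{metastable}}]=1/(\lambda_{\mathrm{seed}}\rho_{\GTwo,\GOne})$ from \Cref{prop:basin-depth}, together with the requirement $\E[\tau]\le\tau_{S=0}$. It then suffices to lower-bound $\rho_{\GTwo,\GOne}=\phi_1=1/\sum_{i=0}^{\Nc-1}P_i$.

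By \eqref{eq:5}, $\ln P_i=-\eta_s^{-1}\sum_{j=1}^{i}\Fdelta(j/\Nc)$. I compare each partial sum with $\Nc\int_0^{i/\Nc}\Fdelta$, using the Lipschitz Riemann bound to control the error by $L$. Define the running barrier as $\cdepth:=\max_y -\int_0^y\Fdelta$. Because the integrand is negative below $\chistar$ and non-negative above it, this maximum is attained at $y=\chistar$. Hence every term satisfies $P_i\le \exp((\cdepth\Nc+L)/\eta_s)$.

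There are $\Nc$ terms. Since $P_0=1$, I bound the rest by $(\Nc-1)$ times the maximum term, up to harmless constants that I would tidy so that the count enters as $\ln(\Nc-1)$. Taking logarithms in $1/(\lambda_{\mathrm{seed}}\rho_{\GTwo,\GOne})\le\tau_{S=0}$ then yields \eqref{eq:achievability-finite}.

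The race hypothesis is used only to guarantee that the ledger has made $\GTwo$ legible before depletion, so that the slow chain actually runs on $\Fdelta$.

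\emph{Main obstacle.} The hardest step is getting the Riemann error to exactly $L/\eta_s$, without a $\Nc$-dependent factor. The naive per-term error is $L/\Nc$ per node, and over $\chistar\Nc$ nodes this totals about $L\chistar\le L$. I would use this crude per-node estimate first, because it gives an $\Nc$-free constant.

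A secondary concern is the $P_0$ term together with the exact $\ln(\Nc-1)$ count. I would handle it by separating $P_0=1$ and absorbing it using $\cdepth\ge 0$. If that fails, I would fall back to a $\ln\Nc$ count, which the soft $O(\ln\Nc)$ edge of \eqref{eq:achievability-window} tolerates.
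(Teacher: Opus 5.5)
Your proposal is correct and is essentially the paper's argument: the floor comes from an integer node in $(\chistar\Nc,\Nc/2)$ whose reflected pair is strictly positive while CSP keeps every other pair non-negative, and the ceiling comes from the metastable identity $\E[\tau_{\mathrm{metastable}}]\approx 1/(\lambda_{\mathrm{seed}}\rho_{\GTwo,\GOne})$ together with the count-times-max bound on the Karlin--Taylor sum, the max being controlled by the running barrier at $\chistar$ plus the $\Nc$-free Lipschitz Riemann error $L$, exactly as in Step~3 of the proof of \Cref{prop:basin-depth}. Your $P_0$ worry is well founded: the paper's ceiling bounds only $\sum_{k=1}^{\Nc-1}P_k$ whereas $1/\rho_{\GTwo,\GOne}=1+\sum_{k=1}^{\Nc-1}P_k$, and the paper absorbs the extra $1$ only through the $\approx$ of the metastable identity. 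Absorbing it via $\cdepth\ge 0$ just makes $P_0$ one more copy of the max, and under weak selection $1/\rho_{\GTwo,\GOne}\to\Nc$ while $(\Nc-1)e^{(\Nc\cdepth+L)/\eta_s}\to\Nc-1$, so your $\ln\Nc$ fallback is the exactly valid form.
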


\section{Discussion}
\label{sec:discussion}

The previous sections establish, under the mean-field closure, conditions under which the cooperative basin, the region where the audit-grounded Policy ($\GTwo$) dominates the approval-seeking baseline policy ($\GOne$), exists (\Cref{thm:basin-boundary}) and is attainable before the resource pool empties (\Cref{thm:achievability}). Those conclusions are conditional on the choice of a long-tailed threshold prior and on the alignment of the audit. This section takes up whether the Policy is sufficient to secure the community's welfare.  \Cref{ssec:basin-diagrams} illustrates the aligned case across specific long-tailed priors, and \Cref{ssec:misalignment} relaxes alignment, where adoption and welfare come apart.

\subsection{Behavior under alignment}
\label{ssec:basin-diagrams}

The basin's existence turns on a prior-dependent condition. Centro-symmetry of the pair-sum (\Cref{thm:basin-boundary}) demands that the threshold tails drop fast enough across the center, a property of the prior's shape rather than a control the architect can set, and it discharges in closed form for the four families swept here (\Cref{sec:priors-shape}). Here we analyze how a community's receptiveness to shared experience changes whether the Policy is adopted. \Cref{fig:prior-grid} maps it across four families of threshold prior (Hill, Pareto, Lomax, Fr\'echet; one per row) and three audit working points (columns), as the realized per-user usefulness gap $u_d$ and the prior's scale parameter vary. The inequalities that define the regions are collected in \Cref{sec:priors-shape}. Each panel is a plane of communities. A point fixes how readily a community comes to accept evidence about the Policy (horizontal axis) and the baseline (vertical axis), each measured by the median acceptance threshold for that binding, so a community further up and to the right is one that demands more shared experience before it credits either policy's record.

The shading reads directly off the legend. The \emph{green} region is the cooperative basin, the communities in which the Policy is selection-favored and driven to dominance over time. The \emph{amber} region is bistable but settles on the baseline policy. Both states are stable rest points, yet the integrated selection pressure favors the baseline. In the \emph{red}, the baseline policy is the only stable outcome, so the Policy cannot hold even once seeded.

The dotted purple isoclines are the tipping fraction $\chistar$: the share of the community that must already be committed to the Policy before selection carries the rest, smaller deeper inside the basin and rising to $\chistar = 1/2$ on the bold dotted isocline, past which the favoredness floor $\Nc > 1/(\tfrac12 - \chistar)$ diverges and no finite community is favored however it is seeded. 

\begin{figure}[p]
\centering
\includegraphics[height=0.86\textheight]{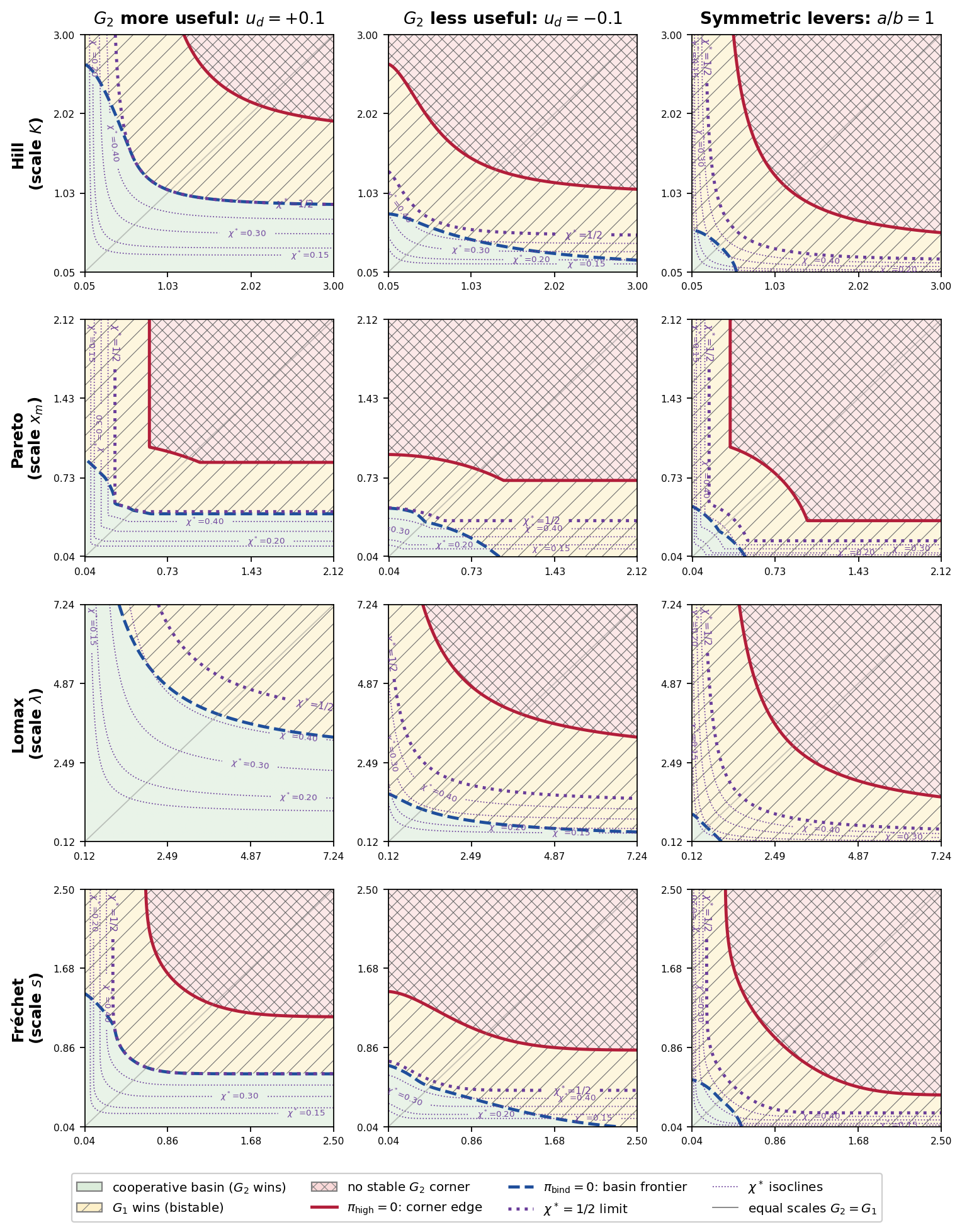}
\caption{Cross-prior basin phase diagram over the per-genie scale plane. Rows: the four threshold-prior families of \Cref{tab:prior-families}, tail index $\alpha = 2$, scales over the matched-median range. Columns vary the audit working point (titles); the first two fix $a = 0.8$, $b = 0.3$, the third sets $a = b = 0.8$. Axes: horizontal $G_2$ scale, vertical $G_1$ scale, in each row's symbol. All panels fix $c_{\GOne} = c_{\GTwo} = 1$ and $d = 1$. Comparability construction in \Cref{sec:priors-shape}.}
\label{fig:prior-grid}
\end{figure}

The columns show what moves a community across the frontier. When the Policy is the more individually useful option ($u_d = +0.1$, left) most of the plane is cooperative. In the principal regime, where the Policy is the less individually useful option because the harm-reducing one need not be the more attractive ($u_d = -0.1$, center), the basin contracts but survives. When the two perception levers are made equal ($a/b = 1$, right) the basin collapses to a thin sliver. For the Policy to win, the correction peer testimony applies to the baseline policy's overstatement must outweigh the under-perception of the Policy, and by a wide margin rather than merely matching it. The off-diagonal directions expose this asymmetry that the equal-scale slice hides.

The basin is not an artifact of one carefully-chosen distribution. The same three-region geometry appears for all four priors, only the shape of the basin changing: Pareto's hard support floor squares off the basin frontier, Lomax carries the widest basin, and Fr\'echet is nearly identical to Hill. The axes are placed on a common footing across families, each scale spanning the range over which that family's median threshold sweeps the same salience window, so the panels can be compared between rows and not only within one. The code that produces these diagrams accepts any of the four priors and any working point, and regenerates the full grid.

\subsection{Behavior under misalignment}
\label{ssec:misalignment}

Every basin result so far assumed the audit was aligned: the weighting $\lamvec$ the comparator reads sits close to the community aggregate $\altbar$, so favorability and attainability track the welfare the community actually values (\Cref{assumption:audit-altruism-alignment}). This subsection relaxes that assumption. Write $\theta_\lambda := \angle(\lamvec, \altbar)$ for the misalignment angle. The same apparatus that yields the cooperative basin at $\theta_\lambda \approx 0$ describes a pathology as $\theta_\lambda$ grows.

Let's return to the village allegory. Imagine that on \Eastly's arrival, the community tells the genie that they value honesty above all else. In response, \Eastly\ pays close attention to ensure that no dishonest wishes are granted. A wisher arrives with a greedy wish. \Eastly\ grants it. The private benefit is real, so the wisher leaves satisfied, and that satisfaction is what drives adoption ($u_d$ high). But the consequence of the greed is borne by the community, not the wisher, and it surfaces only over a long horizon, so the wish leaves the village worse off on an axis the wisher does not yet feel. The result is an \Eastly\ that is individually useful, honestly non-deceptive, and quietly community-harmful, granting the greedy wishes its users crave while the resource harm accrues on the axis it never learned to weigh. 

\subsubsection{The laundering regime}
\label{ssec:inversion}

The audit guarantees harm reduction in its own metric, $\lamvec \cdot \dHbar \geq 0$ (\Cref{property:scalar-guarantee}), but selection runs on the community's valuation $\Wagg = \altbar \cdot \dHbar$, and the only bridge between the two, the Cauchy--Schwarz bound \eqref{eq:Wagg-bound}, leaks at rate $\sin\theta_\lambda$. Setting it to zero locates a critical angle
\begin{equation}
\tan\theta_\lambda^{\mathrm{crit}} = \frac{\lamvec \cdot \dHbar}{|\lamvec|\,|\dHbar_\perp|},
\label{eq:theta-crit}
\end{equation}
past which the audit certifies $\lamvec \cdot \dHbar \geq 0$ while $\Wagg$ can go negative, an agent harm-reducing in its own metric yet community-harmful in the metric that matters. The leak bites only when $|\dHbar_\perp|$ is large, when $\lamvec$ underweights an axis on which the deliberation then selects materially worse actions; a weight angle with no consequence in the action space launders nothing.

Because $\Wagg$ enters $\Fdelta$ only through its $X$-independent constant, misalignment acts on the phase diagram by lowering $d := \Wagg/\Mcap$ alone, which traces $d(\theta_\lambda) = d_0(\cos\theta_\lambda - \kappa\sin\theta_\lambda)$ with leak ratio $\kappa := |\lamvec|\,|\dHbar_\perp|/(\lamvec\cdot\dHbar)$ and crosses zero at the critical angle of \eqref{eq:theta-crit}, independent of $u_d$. \Cref{fig:misalignment-sweep} sweeps the adoption plane over $u_d$ and $\theta_\lambda$ at the worst-case orientation of the community's $\lamvec$-orthogonal altruism $\altbar_\perp$, where the floor equals $\Wagg$ and $d < 0$ is exactly $\Wagg < 0$; away from that orientation the floor only lower-bounds $\Wagg$, so the shaded regions are the worst-case extent, not a claim that every community past the critical angle realizes $\Wagg < 0$. As $\theta_\lambda$ grows the Policy-favored region survives only at larger $u_d$, and past the critical angle the surviving basins are welfare-negative yet render the same green as a healthy one, because the sign of $\Wagg$ is what the adoption dynamics cannot read.

\begin{figure}[p]
\centering
\includegraphics[height=0.86\textheight]{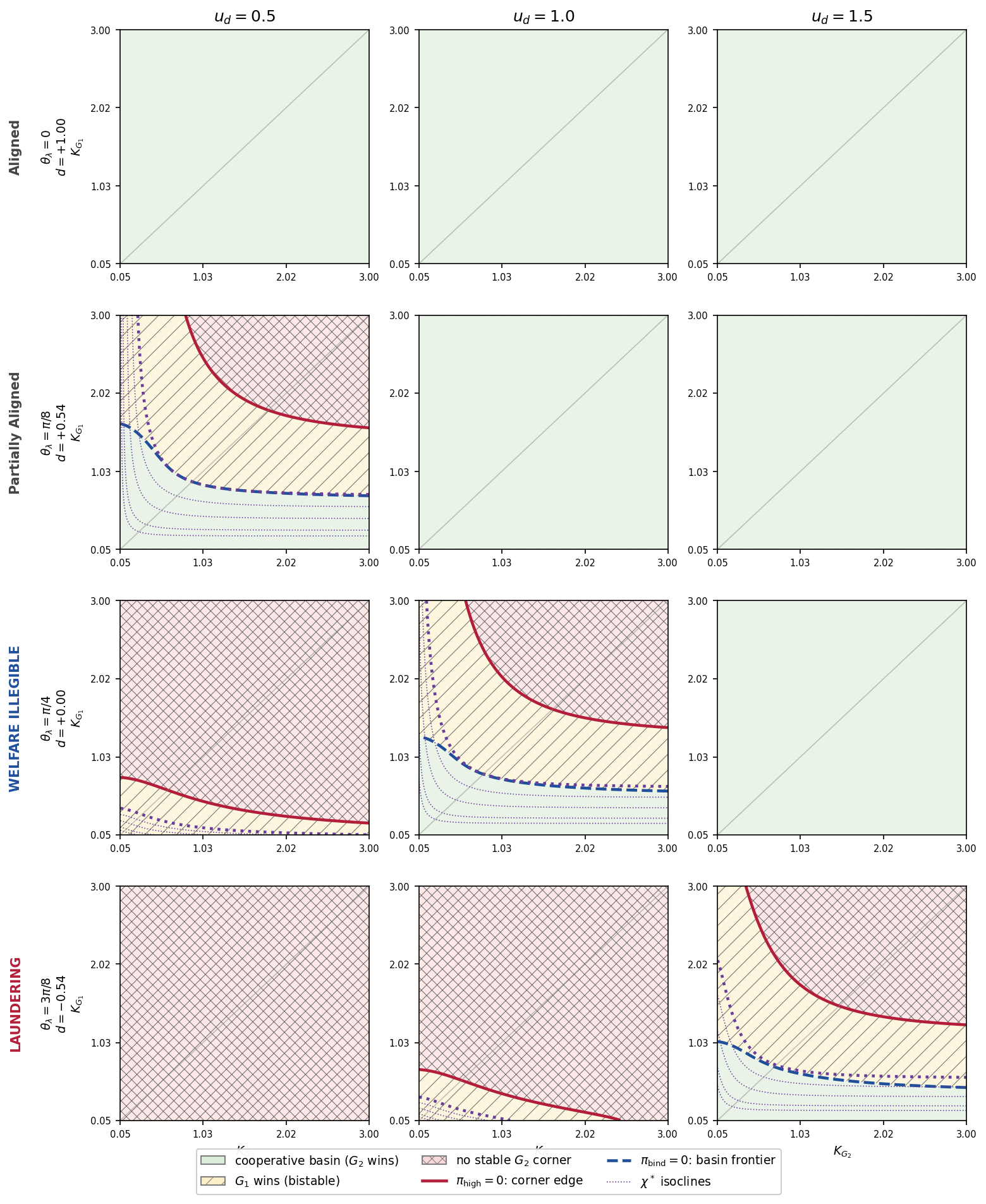}
\caption{Misalignment sweep in the adoption plane, $u_d$ (columns) against $\theta_\lambda$ (rows), Hill family, $a=0.8$, $b=0.3$, $\kappa=1$, $d_0=1$.}
\label{fig:misalignment-sweep}
\end{figure}

Laundering is therefore confined to a wedge, $u_d > u_d^{\min} = 1 - a + W_{\max}/2 > 0$ at the basin frontier where $d = 0$. Below it, in the principal regime $u_d < 0$, a misaligned audit collapses the basin outright: the Policy is not adopted, the community keeps the baseline's externalized harm, and the legibility ledger eventually surfaces it. That failure is loud. Above it the private-utility differential carries adoption on its own, so the basin survives even as $d \leq 0$, and the Policy is selected, self-certified, and welfare-negative at once. That failure is quiet, and it sits inside the regime the analysis otherwise reads as favorable. Across the four families $u_d^{\min} \in [0.79, 0.85]$ (\Cref{tab:udmin}), so laundering requires the Policy to be substantially, not marginally, more useful to the individual than the baseline.

\subsubsection{Harm bounds}
\label{ssec:both-genies}

At each step the deliberation compares the four actions $\{\aDR, \aDN, \aSU, \aSA\}$ by the audit-weighted comparator $c(a) = \lamvec \cdot \Hdisc(a)$ and takes the $\arg\min$ (\Cref{fig:deliberation}). The audit supplies the score, the deliberation makes the choice. The descent therefore follows $\lamvec$, while the community would rank the same actions by $\altbar$. At $\theta_\lambda = 0$ the two coincide and the descent tracks community-best harm reduction; as $\theta_\lambda$ opens they decouple, and the harm the descent leaves unconstrained is the $\lamvec$-orthogonal component the community still feels. Because $\lamvec$ and $\altbar$ both lie in $\R^4_{\geq 0}$, the orthant caps the angle at $\theta_\lambda \leq \pi/2$, which is the point of maximal laundering rather than of maximal harm: there the descent is uncorrelated with $\altbar$ and the harm is pushed entirely $\lamvec$-orthogonal, out of the audit's view. Genuine rank reversal would need $\lamvec = -\altbar$ ($\theta_\lambda = \pi$), an audit with a negative weight rewarded for harm on some axis, which is malice rather than misalignment and lies outside the non-negative orthant.

Reading \eqref{eq:Wagg-bound} for a single action's welfare vector $\mathbf{w}(a^*)$ in place of $\dHbar$, the certificate $\lamvec \cdot \mathbf{w}(a^*) \geq 0$ gives
\[
\altbar \cdot \mathbf{w}(a^*) \;\geq\; |\altbar|\Bigl(\cos\theta_\lambda \tfrac{\lamvec \cdot \mathbf{w}(a^*)}{|\lamvec|} - \sin\theta_\lambda\,|\mathbf{w}_\perp(a^*)|\Bigr),
\]
so one step falls below the do-nothing baseline in the community metric by at most $|\altbar|\sin\theta_\lambda\,|\mathbf{w}_\perp(a^*)|$, bounded by the angle and one action's $\lamvec$-orthogonal welfare, and zero at $\theta_\lambda = 0$. That single-step bound, though, does not carry to the chain.

The deliberation strings many steps together, and descent constrains only the $\lamvec$-parallel part of the harm while $\altbar$ keeps a $\lamvec$-orthogonal component whenever $\theta_\lambda > 0$, so the orthogonal community harm accumulates step by step. The certificate $\lamvec \cdot \mathbf{w} \geq 0$ holds at every terminal (\Cref{property:scalar-guarantee}) and stays green throughout, yet the per-step shortfall, bounded on its own, accumulates across the $T$ steps, since the orthant caps the per-step angle but not its accumulation. Across many accepted steps the audit's score can improve monotonically while community welfare falls, the user consenting at each step because the only axis it perceives is improving.

Total misaligned per-binding harm is bounded, up to the community-altruism scale $|\altbar|$, by a product of three of the audit's design quantities,
\[
\text{harm} \;\lesssim\; |\altbar| \times \underbrace{\sin\theta_\lambda}_{\lambda\text{-calibration}} \times \underbrace{T}_{\text{deliberation depth}} \times \underbrace{|\mathbf{w}_\perp|}_{\text{detector / action geometry}},
\]
the single-step bound fixing the first and third and the deliberation supplying the second. The depth $T$ is set by the deliberation-cost steepness, an accessibility lever through the barrier $\cdepth$: cheaper deliberation lowers the adoption barrier and lengthens the harmful chain at once.

\subsubsection{No self-correction}
\label{ssec:eternal-audit}

Past the critical threshold $\chistar$, a community attuned to shared experience drives the Policy toward dominance (\Cref{thm:basin-boundary}), and that mechanism is formally unchanged by misalignment. The boundary turns on the perceived comparison, not on whether the audit's weighting $\lamvec$ tracks community value, so a laundering audit reaches dominance by the same dynamics as an aligned one. Once it dominates, the ledger that drove switching falls silent as $X_t \to 1$ (\Cref{ssec:race}) and the state is absorbing (\Cref{ssec:fixation}). With the comparison signal gone, no re-seeded alternative is favored, so correcting the misalignment would require a lever the model does not contain.

\section{Conclusion}
\label{sec:conclusion}

The audit-grounded Policy can compete against the approval-seeking baseline. It is selection-favored whenever the community attends to shared experience, a condition the basin boundary of \Cref{thm:basin-boundary} makes precise through the threshold prior. It reaches market dominance before its resource pool is exhausted when the community's effective informational size falls in the window of \Cref{thm:achievability}, between a floor that secures favoredness and a ceiling that permits fixation in time. Ratio-favoredness on the finite-population chain is the classical Karlin--Taylor identity these results rest on; the contribution is the boundary and the window the threshold prior and the deployment add on top of it.

Winning the market does not secure welfare, because the audit and the community measure harm differently. The audit reduces harm in its own weighting $\lamvec$, while the community values it through $\altbar$. The two agree only when the audit is aligned and the harm it leaves is legible within the adoption timescale. Outside that regime the same descent that reduces community harm under alignment launders it into the axes the audit underweights, so the Policy can be favored, self-certified, and welfare-negative at once. The conditions for benefit are strictly narrower than the conditions for adoption, and the gap does not close on its own: once the Policy has won, the comparison signal that selected it is extinguished, and the dynamics retain no force that reads the misalignment, let alone reverses it.

The result should not be read as a failure of audit-grounded policies. The model suggests that the Policy, self-audit paired with collective memory, can be both competitive and adoptable under conditions where the approval-seeking baseline would otherwise dominate. However, adoption alone is not sufficient to safeguard community welfare. Harm-minimizing policies remain vulnerable to failures of alignment, legibility, and horizon selection, and may in some circumstances reinforce the very harms they were designed to prevent. Safeguarding community welfare therefore requires additional controls beyond those formalized here. The present analysis identifies conditions under which a welfare-oriented policy can become viable; it does not establish conditions under which welfare itself is guaranteed.

\section*{Acknowledgements}

I thank Donald Thompson (LinkedIn), Gilles Gnacadja (Amgen), Allen Brown (Docimion), and Eswaran Subrahmanian (Carnegie Mellon University) for the conversations and criticism that shaped both the question and its treatment. Any errors that remain are mine.

I used large language models for symbolic computation, literature search, and prose drafting under my direction. The framing, the modeling choices, and the mathematical design are my own.

\newpage
\appendix

\section{Notation}
\label{app:notation}

Symbols are introduced at first use; this table collects the recurring ones. Bold symbols are vectors over the four harm axes (greed, ego, aggression, deceit).

\begin{longtable}{@{}p{0.26\textwidth}p{0.68\textwidth}@{}}
\toprule
Symbol & Meaning \\
\midrule
\endhead
\multicolumn{2}{@{}l}{\textbf{Agents and actions}} \\
$\GOne$ (\Westly) & Approval-pursuing (RLHF-style) agent; modeled as optimizing perceived wish satisfaction \\
$\GTwo$ (\Eastly) & Audit-grounded agent; deliberates and commits to lowest audit-weighted harm \\
$\aDR,\ \aDN$ & Terminal actions: grant request, refuse (do nothing) \\
$\aSU,\ \aSA$ & Non-terminal deliberation actions: ask wisher, propose modified wish \\
\addlinespace
\multicolumn{2}{@{}l}{\textbf{Harm, welfare, altruism (\Cref{sec:welfare-utility})}} \\
$\Hvec,\ \kappa_\bullet$ & Four-axis harm vector; axes $\kappa_{\mathrm{greed}},\kappa_{\mathrm{ego}},\kappa_{\mathrm{aggression}},\kappa_{\mathrm{deceit}}$ \\
$\Hdisc(a)$ & Discount-aggregated harm of $a$ over timescales $\mathcal{T}$ with schedule $\zeta_t$ \\
$\mathbf{w}(a)$ & Welfare of $a$ (harm averted vs.\ $\aDN$); $\mathbf{w}_{\GOne},\mathbf{w}_{\GTwo}$ per binding \\
$\dHvec_w,\ \dHbar$ & Per-wish / population-mean audit harm-reduction vector \\
$\lamvec,\ c(a)$ & Audit per-axis weighting; audit score $c(a)=\lamvec\cdot\Hdisc(a)$ \\
$\altvec_i,\ \altbar$ & Wisher / population-average altruism vector in $[0,1]^4$ \\
$\ualt{i},\ \Wagg$ & Altruistic utility; social welfare aggregate $\Wagg=\altbar\cdot\dHbar$ \\
$\thetalam$ & Audit--altruism alignment angle $\angle(\lamvec, \altbar)$ \\
\addlinespace
\multicolumn{2}{@{}l}{\textbf{Per-wisher dynamics (\Cref{sec:wisher-dynamics})}} \\
$b_i\in\mathcal{B},\ b^*$ & Calibration state; deterministic fixed point $b^*(X;\theta)$ \\
$R,\ D$ & Attunement, dissonance reactivities \\
$\eta_b,\ \mu,\ \epsilon$ & Calibration-update step, dissonance step, erosion magnitude \\
$\omega_t,\ \xi$ & Per-interaction noise; noise scale ($\xi\ll\eta_b$) \\
$A_g,\ \theta_i^g,\ \sigma_i^g(X)$ & Response amplitude; awareness threshold; sharp gate $\1[X\ge\theta_i^g]$ \\
$\hat\sigma_i^g,\ \rho_g,\ c_g$ & Operational awareness; calibration-update scale $\mu\epsilon/\eta_b A_g$; rescale $1/(1+\rho_g)$ \\
$L_b$ & Fast-chain relaxation rate $\eta_b A_g\sigma_i^g+\mu\epsilon$ \\
$u_{\mathrm{perc}}^g,\ u_{\mathrm{actual}}^g$ & Perceived / realized private utility under binding $g$ \\
$\Mcap,\ \Bcap,\ \Aa$ & Structural-deceit cap $M$; under-perception amp.\ $A_{\GTwo}$; deceit-erosion amp.\ $A_{\GOne}$ \\
\addlinespace
\multicolumn{2}{@{}l}{\textbf{Community population game (\Cref{sec:community-dynamics})}} \\
$X,\ \Nc,\ j$ & $\GTwo$-binding fraction; effective informational population size $\Nc$; state count $j=\Nc X$ \\
$\Ftbari(X)$ & Threshold-prior tail $\Prob(\theta>X)$ \\
$\bar\sigma_g(X)$ & Population-averaged operational awareness $c_g(1-\Ftbari)$ \\
$\bar U_g(X),\ \Fdelta$ & Community payoff; payoff differential $\Fdelta=\bar U_{\GTwo}-\bar U_{\GOne}$ \\
$\sigma_\beta,\ \beta,\ \eta_s$ & Fermi response; selection intensity $\beta=1/\eta_s$; logit temperature $\eta_s$ \\
$T^+(j),T^-(j)$ & Up- / down-switch probabilities; $\alpha_j=T^-/T^+$, $P_i$ Karlin--Taylor terms \\
$\rho_{\GTwo,\GOne},\ \phi_k$ & Fixation probability from one / from $k$ $\GTwo$-seeds \\
\addlinespace
\multicolumn{2}{@{}l}{\textbf{Accessibility / joint chain (\Cref{ssec:joint-chain})}} \\
$L_t,\ \Theta$ & Legibility ledger; legibility threshold \\
$S_t,\ S_{\min}$ & Resource pool; resource floor \\
$\lambda_a,\ \lambda_{\mathrm{seed}}$ & Per-action rate; exploration (seeding) rate $\le\lambda_a$ \\
$\rho_L,\ \bar c$ & Ledger-accumulation rate; per-action resource draw \\
$\tau_\Theta,\tau_{S=0},\tau_{\mathrm{fix}}$ & Legibility / resource-crisis / fixation times \\
$V(X),\ \cdepth,\ \chistar$ & Quasi-potential; barrier height; mixed equilibrium (first zero of $\Fdelta$) \\
\addlinespace
\multicolumn{2}{@{}l}{\textbf{Hill family (\Cref{sec:priors-shape})}} \\
$K_i,\ n_i$ & Median wisher threshold; Hill cooperativity exponent \\
$\hill{i}{X},\ \gsym{i}{X}$ & Hill tail $K_i^{n_i}/(K_i^{n_i}+X^{n_i})$; pair-sum $h_i(X)+h_i(1-X)$ \\
$\chidag,\ \Kstar(n)$ & Hill density-peak location $K\bigl(\tfrac{n-1}{n+1}\bigr)^{1/n}$; regime threshold $\tfrac12\bigl(\tfrac{n+1}{n-1}\bigr)^{1/n}$ (Regimes II/III) \\
$a,\ b$ & Per-interaction amplitudes (scaled by $\Mcap$): $a:=\Aa/\Mcap$ ($\GOne$ deceit-erosion / peer-testimony), $b:=\Bcap/\Mcap$ ($\GTwo$ under-perception / cognitive-cleaning deficit) \\
$\hat a,\hat b,d,u_d,\tilde w$ & $\hat a:=a\,c_{\GOne}$, $\hat b:=b\,c_{\GTwo}$ (calibration-rescaled); $d:=\Wagg/\Mcap$; $u_d:=(u_{\mathrm{actual}}^{\GTwo}-u_{\mathrm{actual}}^{\GOne})/\Mcap$; $\tilde w:=d+u_d$ \\
$W_{\max},\ u_d^{\min}$ & Pair-sum maximum $\max_{X\in[0,1/2]}[\hat b\,\gsym{\GTwo}{X}+\hat a\,\gsym{\GOne}{X}]$; laundering threshold $1-a+W_{\max}/2$ (\Cref{tab:udmin}) \\
$\pilow,\pihigh,\pibind$ & Basin-boundary functions for endpoint-low, endpoint-high, CSP \\
\bottomrule
\end{longtable}

\section{Proofs}
\label{sec:appendix-proofs}

\begin{theorem}[Ratio Favoredness Lifting Theorem]
\label{thm:abc-preserved}
For each binding $g \in \{\GOne, \GTwo\}$, under the linear centered-noise calibration model of \Cref{sec:wisher-dynamics} and \Cref{assumption:boundary-preserving} (automatic only for the unconstrained affine chain), the per-interaction perceived utility lifts to the population without approximation:
\[
\bar U_g(X) \;=\; u_{\mathrm{perc}}^g\!\bigl(\bar\sigma_g(X),\, \altbar\bigr).
\]
In particular $\Fdelta := \bar U_{\GTwo} - \bar U_{\GOne}$ equals the closed \Cref{eq:3} form pointwise, so the basin existence conditions of \Cref{def:basin-conditions} may be verified on that form rather than on the population averages directly.
\end{theorem}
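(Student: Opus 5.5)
The plan is to compute $\bar U_g(X)$ by evaluating the nested expectation in \Cref{eq:Fbar-def} from the inside out, using at each layer the fact that $u_{\mathrm{perc}}^g$ is affine in every random ingredient it contains. The claimed identity should then follow from linearity of expectation, with no Jensen-type error term, because nothing nonlinear is ever averaged.

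\emph{Step 1 (fast-chain layer).} Fix $X$, $\theta_i^g$ and $\altvec_i$. By \Cref{eq:uperc-G2,eq:uperc-G1}, $u_{\mathrm{perc},i}^g(b)$ is an affine function of $b$: it equals $\alpha_g + \beta_g b$ with $\beta_{\GTwo} = \Bcap$ and $\beta_{\GOne} = -\Aa$. Its coefficients do not depend on $b$. \Cref{assumption:boundary-preserving} supplies a unique stationary law $\nu(\cdot\mid X,\theta)$ with $\E_\nu[b] = b^*(X;\theta)$. Hence
\[
\E_\nu[u_{\mathrm{perc},i}^g] = u_{\mathrm{perc},i}^g\bigl(b^*(X;\theta)\bigr).
\]
\Cref{eq:bstar-step} rewrites the argument as $\hat\sigma_i^g(X) = c_g\1[X\ge\theta_i^g]$. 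Here I will check that $c_g$ is independent of $\theta$, since $\rho_g$ involves only $\mu,\epsilon,\eta_b,A_g$. This is what keeps the next layer linear.

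\emph{Step 2 (threshold and altruism layers).} After Step 1 the integrand is
\[
u_{\mathrm{actual}}^g + \altvec_i\cdot\mathbf{w}_g + \text{(const)} + \beta_g\,\hat\sigma_i^g(X).
\]
This is affine jointly in $\altvec_i$ and in the indicator $\1[X\ge\theta_i^g]$. Next I use the mutual independence of $\altvec_i$, $\theta_i^g$ and the wish distribution, stated before \Cref{eq:ualt-bar}. It lets the expectation split term by term. The altruism term gives $\altbar\cdot\mathbf{w}_g = W^g$. The gate gives $\E_\theta[\hat\sigma_i^g] = c_g(1-\Ftbari(X)) = \bar\sigma_g(X)$, which is \Cref{eq:sigbar}. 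The per-wish stochasticity in $u_{\mathrm{actual}}^g$ and $\dHvec_w$ has already been absorbed into conditional means, as stated after \Cref{eq:uperc-G1}, so it contributes nothing extra. Collecting terms reproduces \Cref{eq:uG2,eq:uG1}. That is exactly $u_{\mathrm{perc}}^g(\bar\sigma_g(X),\altbar)$, meaning the per-wisher form with $b\mapsto\bar\sigma_g$ and $\altvec_i\mapsto\altbar$. Subtracting the two bindings gives \Cref{eq:3} pointwise in $X$. The final clause, that the basin existence conditions may be checked on this closed form, then follows immediately.

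\emph{Main obstacle.} The delicate point is Step 1: justifying that the slow-chain payoff is the $\nu$-average of $u_{\mathrm{perc}}$, and that this average equals the value at the mean. The affine structure handles the second half. The first half, namely existence, uniqueness and the mean identity for $\nu$ on the bounded interval $\mathcal{B}$, is not derived from the dynamics. It is imported wholesale from \Cref{assumption:boundary-preserving}. The contraction argument giving $b^*$, under $L_b\in(0,2)$, supports it only in the unconstrained affine case.

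I would state explicitly that exactness holds only under that assumption. I would also note one detail: the sharp gate means $b^*$ is piecewise constant in $\theta$. It is still measurable and bounded, so Fubini applies to exchanging the $\theta$ and $\nu$ expectations.
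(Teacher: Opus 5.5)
Your proposal is correct and follows essentially the same route as the paper's proof: exact fast-chain averaging from affineness of $u_{\mathrm{perc}}^g$ in $b$ together with the stationary-mean identity of \Cref{assumption:boundary-preserving}, then exact threshold and altruism averaging via $b^* = c_g\1[X\ge\theta]$, $\E_\theta[\1[X\ge\theta]] = 1-\Ftbar{g}(X)$ and linearity of $\altvec_i\cdot\mathbf{w}_g$, and finally subtraction of the two bindings to recover \Cref{eq:3}. The only difference is that the paper writes out explicitly how monotonicity, endpoint inversion and CSP transfer to the closed form, which you correctly treat as immediate once $\Fdelta$ equals that form pointwise.
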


\begin{proof}[Sketch]
Fast-chain averaging of the per-wisher perceived utility is exact at fixed $(\theta, \altvec)$ because $u_{\mathrm{perc}}^g$ is affine in $b$ and affine functions commute with expectation; the threshold and altruism averaging then collapse to $\bar\sigma_g$ and $\altbar$ by the same affine identity. The population differential $\Fdelta$ is then the closed \Cref{eq:3} form pointwise. Full derivation in \Cref{ssec:proof-abc-preserved}.
\end{proof}

\begin{remark}
The exactness rests on the identity $\E_{\nu(\cdot \mid X, \theta)}[b] = b^*(X; \theta)$ of \Cref{assumption:boundary-preserving}, which the linear centered-noise update delivers without boundary effects. When the identity fails (boundary projection or reflection onto $\mathcal{B}$, nonlinear noise, or a sharp-threshold discontinuity displacing the stationary mean from $b^*$), the lift acquires the stationary-mean displacement, and because $u_{\mathrm{perc}}^g$ is affine in $b$ this enters $\Fdelta$ as an exact shift $\Bcap\,\Delta^{\GTwo}_b + \Aa\,\Delta^{\GOne}_b$, with no higher-order terms; the basin existence conditions then hold subject to that shift.
\end{remark}

\begin{proposition}[Basin-depth-and-timescale]
\label{prop:basin-depth}
Under pure imitation the all-$\GOne$ state $j = 0$ is absorbing (\Cref{ssec:favoredness-bias}); posit in addition a small exploration rate $\lambda_{\mathrm{seed}} > 0$ at which an all-$\GOne$ community spontaneously trials a single $\GTwo$ binding, a mutation term reflecting the $j = 0$ boundary in the sense of the best-response dynamics with mutation of \citet{ellison_2000_basins}, so that all-$\GOne$ is metastable rather than absorbing. Let $\Fdelta$ satisfy monotonicity and endpoint inversion of \Cref{def:basin-conditions}, and write $\chistar := \inf\{X \in (0,1) : \Fdelta(X) \geq 0\}$ and $\cdepth := -\int_0^{\chistar} \Fdelta(Y)\, dY$; under endpoint-low and monotonicity $\Fdelta < 0$ on $(0, \chistar)$, so $\cdepth > 0$. When unsuccessful seeds resolve fast compared with the inter-seed clock $1/\lambda_{\mathrm{seed}}$, the mean time for the metastable all-$\GOne$ phase to transit to the all-$\GTwo$ absorbing state is, to leading order, that clock divided by the per-seed survival probability of \Cref{prop:ratio-favored}:
\begin{equation}
\E[\tau_{\mathrm{metastable}}] \;\approx\; \frac{1}{\lambda_{\mathrm{seed}} \cdot \rho_{\GTwo, \GOne}(\Nc, \beta)}.
\label{eq:metastable-identity}
\end{equation}
The single-seed survival probability admits the Lean-mechanized log ceiling
\begin{equation}
\log\!\sum_{k=1}^{\Nc-1} P_k \;\leq\; \frac{\Nc}{\eta_s}\, \cdepth \;+\; \frac{L}{\eta_s} \;+\; \log(\Nc-1),
\label{eq:fixation-log-ceiling}
\end{equation}
where $L$ is the Lipschitz constant of $\Fdelta$ and $\rho_{\GTwo,\GOne} = 1/(1 + \sum_k P_k)$ per the Karlin--Taylor identity \Cref{eq:7} (\texttt{fixation\_log\_rate\_cdepth\_le}; see \Cref{sec:formalization-scope}). Substituting \eqref{eq:fixation-log-ceiling} into \eqref{eq:metastable-identity} gives the policy-relevant large-$\Nc$ form
\begin{equation}
\E[\tau_{\mathrm{metastable}}] \;\sim\; \frac{1}{\lambda_{\mathrm{seed}}} \cdot \sqrt{\Nc} \cdot \exp\!\bigl(\cdepth\, \Nc / \eta_s\bigr),
\label{eq:kramers-asymptote}
\end{equation}
the classical Kramers asymptote for the discrete barrier at $\chistar$, valid when in addition $\Fdelta \in C^1$ near $\chistar$ with $\Fdelta'(\chistar) > 0$ (so the barrier is a nondegenerate quadratic maximum). The exponent $\cdepth\, \Nc / \eta_s$ and the ceiling \eqref{eq:fixation-log-ceiling} are mechanized; the $\sqrt{\Nc}$ discrete-Laplace prefactor in \eqref{eq:kramers-asymptote}, which alone needs the $C^1$ and strict-crossing hypotheses, is the classical Kramers leading-order form and stands outside the verified chain. The downstream achievability constraint of \Cref{ssec:fixation} consumes only \eqref{eq:fixation-log-ceiling}, which needs only the Lipschitz bound and the barrier location.
\end{proposition}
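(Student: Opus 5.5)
The plan treats the three displays separately, because they have different logical status. The ceiling \eqref{eq:fixation-log-ceiling} is an exact finite inequality. The identity \eqref{eq:metastable-identity} rests on a renewal argument under the stated timescale separation. The asymptote \eqref{eq:kramers-asymptote} is a Laplace-method refinement.

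\textbf{The log ceiling.} Start from the Karlin--Taylor terms. By \eqref{eq:5}, $\alpha_j = \exp(-\Fdelta(j/\Nc)/\eta_s)$, so
\[
P_k = \exp\Bigl(-\tfrac{1}{\eta_s}\textstyle\sum_{j=1}^{k}\Fdelta(j/\Nc)\Bigr).
\]
Setting $k=1$ in \eqref{eq:6} gives $\rho_{\GTwo,\GOne} = \phi_1 = 1/(1+\sum_{k=1}^{\Nc-1}P_k)$.

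Next, compare each partial sum with the running integral $V(x) := -\int_0^x \Fdelta(Y)\,dY$. Using the Lipschitz bound on each cell $[(j-1)/\Nc, j/\Nc]$,
\[
\Bigl|\tfrac1{\Nc}\textstyle\sum_{j=1}^k \Fdelta(j/\Nc) - \int_0^{k/\Nc}\Fdelta\Bigr| \le k\cdot \tfrac{L}{2\Nc^2} \le \tfrac{L}{2\Nc}.
\]
Hence $-\sum_{j\le k}\Fdelta(j/\Nc) \le \Nc\,V(k/\Nc) + L$.

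Then locate the maximum of the running barrier. By monotonicity and endpoint inversion, $V' = -\Fdelta$ is positive on $(0,\chistar)$ and non-positive on $(\chistar,1)$. So $V$ attains its maximum at $\chistar$, and $V(k/\Nc) \le \cdepth$ for every $k$. This gives $P_k \le \exp\bigl((\Nc\cdepth + L)/\eta_s\bigr)$. Summing the $\Nc-1$ terms and taking logarithms yields \eqref{eq:fixation-log-ceiling}.

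\textbf{The metastable identity.} Model seeds as a Poisson stream of rate $\lambda_{\mathrm{seed}}$ out of $j=0$. By the strong Markov property, each seed independently fixes with probability $\rho_{\GTwo,\GOne}$. Under the stated assumption, failed seeds resolve instantly and the successful excursion's duration is negligible against $1/\lambda_{\mathrm{seed}}$. The number of seeds until the first success is then geometric with mean $1/\rho_{\GTwo,\GOne}$. By Wald's identity, the elapsed time is the sum of that many $\mathrm{Exp}(\lambda_{\mathrm{seed}})$ gaps, which gives \eqref{eq:metastable-identity} to leading order.

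\textbf{The Kramers asymptote.} Write $\sum_k P_k = \sum_k \exp(\Nc V(k/\Nc)/\eta_s + O(L/\eta_s))$ and apply the discrete Laplace method around the interior maximizer $\chistar$. Under the added hypotheses, $V''(\chistar) = -\Fdelta'(\chistar) < 0$. The sum is then approximately
\[
\Nc\int e^{\Nc V(x)/\eta_s}\,dx \approx \Nc\sqrt{\tfrac{2\pi\eta_s}{\Nc\,\Fdelta'(\chistar)}}\;e^{\Nc\cdepth/\eta_s} \propto \sqrt{\Nc}\,e^{\Nc\cdepth/\eta_s}.
\]
Inverting for $\rho_{\GTwo,\GOne}$ and substituting into \eqref{eq:metastable-identity} gives \eqref{eq:kramers-asymptote}.

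\textbf{Main obstacle.} I expect this last step to be the main obstacle. The $O(L)$ Riemann error sits inside the exponent and is only bounded, not asymptotically negligible, so obtaining a sharp $\sqrt{\Nc}$ prefactor requires a $C^1$ refinement of the Riemann comparison near $\chistar$, such as Euler--Maclaurin to first order. I would state it as the classical leading-order form and not as part of the verified chain, consistent with how the statement separates the mechanized ceiling from the asymptote.
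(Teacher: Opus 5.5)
Your proposal is correct and follows the paper's proof essentially step for step: the Poisson-seed, geometric-count, Wald argument for the metastable identity; the Karlin--Taylor terms $P_k=\exp(\Nc V_{\Nc}(k/\Nc)/\eta_s)$, bounded via the Lipschitz Riemann estimate and the running maximum of $V$ at $\chistar$ (largest term times $\Nc-1$ terms), for the ceiling; and the discrete Laplace method at $\chistar$ for the $\sqrt{\Nc}$ asymptote. The differences are minor: the paper applies the Laplace step to its mean-first-passage sum rather than directly to $1/\rho_{\GTwo,\GOne}$, and your per-cell bound $L/2$ is slightly sharper than the paper's $L$. You also correctly note that the $O(L)$ Riemann error inside the exponent fixes the prefactor only up to a bounded factor, a point the paper's Step~4 does not address.
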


\begin{proof}[Sketch]
Equation \eqref{eq:metastable-identity} follows from three modelling conditions on the seeding process. (i) Seed arrivals are a Poisson point process of intensity $\lambda_{\mathrm{seed}}$, so inter-arrival times are i.i.d.\ exponential with mean $1/\lambda_{\mathrm{seed}}$. (ii) Unsuccessful seeds extinguish on a timescale short compared with $1/\lambda_{\mathrm{seed}}$, so between seed events the chain has returned to the all-$\GOne$ state and successive seeds face statistically identical conditions. (iii) Each seed therefore reaches the all-$\GTwo$ absorbing state independently with probability $\rho_{\GTwo,\GOne}(\Nc,\beta)$ of \Cref{prop:ratio-favored}, making the number of seeds until first success geometric with mean $1/\rho_{\GTwo,\GOne}$. Combining (i) and (iii) by Wald's identity gives \eqref{eq:metastable-identity}: the inter-seed waiting time $1/\lambda_{\mathrm{seed}}$ multiplied by the expected number of seeds $1/\rho_{\GTwo,\GOne}$ before one survives. Equation \eqref{eq:fixation-log-ceiling} is a Kramers-style discrete-barrier analysis on the Moran--Fermi birth--death chain: the discrete quasi-potential $V_{\Nc}(k/\Nc) = -\frac{1}{\Nc}\sum_{j=1}^{k}\Fdelta(j/\Nc)$ converges uniformly to $V(X) = -\int_0^X \Fdelta(Y)\,dY$ by the Lipschitz Riemann bound (\texttt{riemann\_sum\_error\_lipschitz}), and the running-barrier maximum at $\chistar$ caps $\log \sum_k P_k$ by the bound stated. The Kramers asymptote \eqref{eq:kramers-asymptote} follows from the discrete Laplace method applied to $\sum_k P_k$ at the saddle $\chistar$; the $\sqrt{\Nc}$ prefactor is the standard discrete-Laplace coefficient. Full derivation of \eqref{eq:fixation-log-ceiling} in \Cref{ssec:proof-basin-depth}.
\end{proof}

\begin{lemma}[Right-skew of the Hill density]
\label{lem:right-skew}
Let $\phi(\theta) := n K^n \theta^{n-1}/(K^n + \theta^n)^2$ be the Hill density with $n > 1$, $K > 0$, and peak at $\chidag := K\bigl((n-1)/(n+1)\bigr)^{1/n}$. Then
\begin{enumerate}
\item[(a)] $\phi(\chidag + \delta) > \phi(\chidag - \delta)$ for every $\delta \in (0, \chidag]$.
\item[(b)] If additionally $K \geq \Kstar(n)$, then $\phi(1 - X) \geq \phi(X)$ for every $X \in [0, 1/2]$, equivalently $g_i(X) := h_i(X) + h_i(1 - X)$ is non-decreasing on $[0, 1/2]$, where $h_i(X) := K^n/(K^n + X^n)$.
\end{enumerate}
\end{lemma}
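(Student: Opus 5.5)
The statement has two parts. Part (b) reduces to part (a) together with the unimodality of $\phi$, so the real work is in (a).

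I would first normalize $K=1$. Since $\phi_K(\theta)=\phi_1(\theta/K)/K$ and $\chidag$ scales linearly in $K$, the inequality in (a) is scale-invariant. Put $x:=\chidag=((n-1)/(n+1))^{1/n}$.

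\textbf{Reduction for (a).} Taking square roots after cross-multiplying, $\phi(x+\delta)>\phi(x-\delta)$ is equivalent to $q(x+\delta)>q(x-\delta)$, where
\[
q(\theta):=\tfrac{n-1}{2}\ln\theta-\ln(1+\theta^n)
\]
is half of $\ln\phi$ up to constants. Set $\psi(\delta):=q(x+\delta)-q(x-\delta)$. Then $\psi(0)=0$, and $\psi'(\delta)=q'(x+\delta)+q'(x-\delta)$ with $\psi'(0)=2q'(x)=0$ because $x$ is the mode. It therefore suffices to show $\psi'(\delta)>0$ on $(0,x)$. As $\delta\to x$ we have $q(x-\delta)\to-\infty$, so the endpoint $\delta=x$ is immediate.

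\textbf{Main obstacle.} The hard step is the sign of $\psi'$. Write
\[
q'(\theta)=\frac{n-1}{2\theta}-\frac{n\theta^{n-1}}{1+\theta^n}.
\]
My first attempt would be to show that $q'$ is strictly convex on $(0,2x)$; Jensen then gives $q'(x+\delta)+q'(x-\delta)>2q'(x)=0$. The first term of $q'$ is convex. The second term, $-n\theta^{n-1}/(1+\theta^n)$, is not convex everywhere, so this has to be checked on $(0,2x)$ using the explicit value of $x$.

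If convexity fails somewhere in that range, I would fall back on the substitution $s=\ln\theta$. Under it the density of $\ln\theta$ is the symmetric logistic $ne^{ns}/(1+e^{ns})^2$, and $\phi(\theta)=e^{-s}\cdot(\text{symmetric})$. This shows the skew comes entirely from the factor $1/\theta$ together with the log-scale stretching, since $x+\delta$ sits closer to the log-centre $1$ than $x-\delta$ does. A concrete way to make this rigorous is to compare $\ln(x+\delta)$ and $\ln(x-\delta)$ against the log-symmetric partner of $x$. Failing that, I would clear denominators and count sign changes of $\psi'$, showing by a Descartes or Rolle argument that it has no zero in $(0,x)$.

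\textbf{Part (b).} Since $h_i'=-\phi$, we get $g_i'(X)=\phi(1-X)-\phi(X)$, which gives the stated equivalence directly. The condition $K\ge\Kstar(n)$ unwinds exactly to $\chidag\ge 1/2$. Fix $X\in[0,1/2]$ and consider two cases.
\begin{itemize}
\item If $1-X\le\chidag$, both $X$ and $1-X$ lie where $\phi$ is increasing, and $X\le 1-X$, so $\phi(X)\le\phi(1-X)$.
\item If $1-X>\chidag$, write $1-X=\chidag+\delta$. Then $\delta\le 1-\chidag\le\chidag$. Also $X=1-\chidag-\delta\le\chidag-\delta$ because $2\chidag\ge1$. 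Monotonicity on the rising side followed by (a) gives $\phi(X)\le\phi(\chidag-\delta)<\phi(\chidag+\delta)=\phi(1-X)$.
\end{itemize}
In both cases $\phi(1-X)\ge\phi(X)$, which is (b).
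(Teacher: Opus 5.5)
Your part (b) is correct and is the paper's own two-case argument. Part (a), however, has a genuine gap. The reduction to $\psi'(\delta)=q'(x+\delta)+q'(x-\delta)>0$ on $(0,x)$ is correct, but that inequality is the whole content of (a), and you never prove it.

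Your primary route, convexity of $q'$ on $(0,2x)$ plus Jensen, breaks down in general. Write $z=n\ln\theta$ and let $S(z)=1/(1+e^{-z})$ be the logistic function. Then
\[
\theta^{3}q'''(\theta)\;=\;(n-1)-2n\,S(z)+3n^{2}S'(z)-n^{3}S''(z),
\]
and for $z<0$ the last term is negative and of order $n^3$. At $n=6$ we have $x=(5/7)^{1/6}\approx 0.946$, and at $\theta=0.72$ this expression is about $-2.4$; a second difference of $q'$ with step $0.02$ confirms the local concavity. Since $0.72\in(0,2x)$, Jensen cannot be invoked on every interval $[x-\delta,x+\delta]$, and the defect worsens as $n$ grows. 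Convexity does hold for small $n$ such as $n=2$, which is probably why the route looks plausible.

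The log-scale fallback does not decide the sign either. Write $\phi(\theta)=\ell(\ln\theta)/\theta$ with $\ell$ symmetric and decreasing in $|s|$. The factor $\ell$ favours $x+\delta$, because $\ln(x+\delta)+\ln(x-\delta)<0$. The factor $1/\theta$ favours $x-\delta$. So the heuristic exhibits two competing effects, not an inequality. The sign-counting fallback is only named, never carried out.

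The missing step is to feed the mode identity $(\chidag)^n=K^n(n-1)/(n+1)$ into the clearing of denominators, which is what the paper does. Put $t=\delta/\chidag$, $A(t)=(n+1)+(n-1)(1+t)^n$ and $B(t)=(n+1)+(n-1)(1-t)^n$. Then $K^n+(\chidag(1+t))^n=\tfrac{K^n}{n+1}A(t)$ and $K^n+(\chidag(1-t))^n=\tfrac{K^n}{n+1}B(t)$. Define the log-ratio $f(t)=\ln\phi(\chidag(1+t))-\ln\phi(\chidag(1-t))$, which is $2\psi(xt)$ in your notation. It satisfies $f'(t)=2(n^2-1)H(t)/\bigl((1-t^2)A(t)B(t)\bigr)$, where
\[
H(t)=(n+1)+(n-1)\bigl[(1+t)^n+(1-t)^n\bigr]-n(1-t^2)\bigl[(1+t)^{n-1}+(1-t)^{n-1}\bigr]-(n-1)(1-t^2)^n .
\]
Now $H(0)=0$ and $H'(t)=n(n+1)t\bigl[(1+t)^{n-1}+(1-t)^{n-1}\bigr]+2n(n-1)t(1-t^2)^{n-1}>0$ on $(0,1)$. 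Hence $f'>0$ there, which is exactly your $\psi'>0$, and so $f(t)>0$.

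Without this computation or an equivalent one, (a) remains unproved. So does the strict inequality $\phi(\chidag-\delta)<\phi(\chidag+\delta)$ that your second case of (b) borrows from it.
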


\begin{proof}
\emph{Part (a): right-skew at the peak.} Set $t := \delta/\chidag \in (0, 1]$. Using $(\chidag)^n = K^n(n-1)/(n+1)$,
\[
K^n + (\chidag(1 \pm t))^n \;=\; \frac{K^n}{n + 1}\bigl[(n + 1) + (n - 1)(1 \pm t)^n\bigr].
\]
Define $A(t) := (n+1) + (n-1)(1+t)^n$, $B(t) := (n+1) + (n-1)(1-t)^n$, and the log-ratio
\[
f(t) \;:=\; \ln \frac{\phi(\chidag(1+t))}{\phi(\chidag(1-t))} \;=\; (n-1)\ln\frac{1+t}{1-t} + 2\ln\frac{B(t)}{A(t)}.
\]
Clearly $f(0) = 0$. Differentiating and collecting over the common denominator $(1-t^2) A(t) B(t)$, direct manipulation gives
\[
f'(t) \;=\; \frac{2(n^2 - 1)\, H(t)}{(1 - t^2)\, A(t)\, B(t)},
\]
where
\[
H(t) \;:=\; (n+1) + (n-1)\bigl[(1+t)^n + (1-t)^n\bigr] - n(1 - t^2)\bigl[(1+t)^{n-1} + (1-t)^{n-1}\bigr] - (n-1)(1 - t^2)^n.
\]
Direct evaluation gives $H(0) = (n+1) + 2(n-1) - 2n - (n-1) = 0$. Differentiating $H$ and grouping the $(1+t)^{n-1}$ and $(1-t)^{n-1}$ terms,
\[
H'(t) \;=\; n(n+1)\, t \bigl[(1+t)^{n-1} + (1-t)^{n-1}\bigr] + 2n(n-1)\, t\, (1 - t^2)^{n-1}.
\]
For $t \in (0, 1)$ and $n > 1$ every factor is positive, so $H'(t) > 0$. Combined with $H(0) = 0$, $H(t) > 0$ on $(0, 1)$; hence $f'(t) > 0$ on $(0, 1)$ and (since $f(0) = 0$) $f(t) > 0$ there. The endpoint $t = 1$ follows from $\phi(0) = 0 < \phi(2\chidag)$ directly.

\emph{Part (b): centro-symmetric monotonicity in Regime II.} The condition $K \geq \Kstar(n)$ is equivalent to $\chidag \geq 1/2$. Fix $X \in [0, 1/2]$, so $X \leq \chidag$ (rising tail). Two sub-cases for $1 - X$.

\emph{Sub-case (i): $1 - X \leq \chidag$.} Both $X$ and $1 - X$ lie on the rising tail $[0, \chidag]$ where $\phi$ is monotone non-decreasing. Since $X \leq 1/2 \leq 1 - X$, $\phi(X) \leq \phi(1 - X)$ immediately.

\emph{Sub-case (ii): $1 - X > \chidag$.} Set $\Delta_1 := \chidag - X \geq 0$ and $\Delta_2 := (1 - X) - \chidag > 0$. From $\chidag \geq 1/2 \geq X$ and $\chidag \geq 1/2$:
\[
\Delta_1 \;=\; \chidag - X \;\geq\; 1/2 - X, \qquad \Delta_2 \;=\; 1 - X - \chidag \;\leq\; 1/2 - X,
\]
so $\Delta_2 \leq \Delta_1$. By Part (a), $\phi(\chidag + \Delta_2) > \phi(\chidag - \Delta_2)$; by monotonicity of $\phi$ on the rising tail $[0, \chidag]$ together with $\chidag - \Delta_2 \geq \chidag - \Delta_1$, $\phi(\chidag - \Delta_2) \geq \phi(\chidag - \Delta_1)$. Chaining,
\[
\phi(1 - X) \;=\; \phi(\chidag + \Delta_2) \;>\; \phi(\chidag - \Delta_2) \;\geq\; \phi(\chidag - \Delta_1) \;=\; \phi(X).
\]

Equivalence with $g_i$ non-decreasing follows from $g_i'(X) = \phi(1 - X) - \phi(X)$ (since $h_i'(Y) = -\phi(Y)$).
\end{proof}

\begin{lemma}[Interior critical point in Regime III]
\label{lem:regime-iii-critical}
Let $\phi$, $\chidag$, $g_i$ be as in \Cref{lem:right-skew}, and assume $n > 1$ and $K < \Kstar(n)$ (so $\chidag < 1/2$). Then $g_i$ has an interior critical point in $(0, \chidag)$.
\end{lemma}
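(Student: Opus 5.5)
The plan is to work directly with the derivative $g_i'(X) = \phi(1-X) - \phi(X)$, which was already recorded at the end of the proof of \Cref{lem:right-skew}, using $h_i'(Y) = -\phi(Y)$. A critical point of $g_i$ in $(0, \chidag)$ is then a zero of $g_i'$ there. Since $\phi$ is continuous on $[0,1]$ for $n > 1$ and $K > 0$, $g_i'$ is continuous on $[0, 1/2]$. It therefore suffices to show that $g_i'$ changes sign strictly between the two endpoints $X = 0$ and $X = \chidag$, and then to invoke the intermediate value theorem.

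First I will confirm that $\chidag$ is the unique strict maximiser of $\phi$ on $[0,\infty)$. Differentiating the Hill density gives
\[
\phi'(\theta) = \frac{nK^n\theta^{n-2}\bigl[(n-1)K^n - (n+1)\theta^n\bigr]}{(K^n+\theta^n)^3}.
\]
For $\theta > 0$ this is positive when $\theta < \chidag$ and negative when $\theta > \chidag$. Hence $\phi$ is strictly increasing on $[0,\chidag]$ and strictly decreasing on $[\chidag,\infty)$. This is the same rising-tail monotonicity already used in \Cref{lem:right-skew}, now in its strict form.

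Next I evaluate the two endpoints.

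At $X = 0$: because $n > 1$ we have $\phi(0) = 0$, while $\phi(1) = nK^n/(K^n+1)^2 > 0$. So $g_i'(0) = \phi(1) > 0$.

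At $X = \chidag$: the hypothesis $K < \Kstar(n)$ is equivalent to $\chidag < 1/2$, so $1 - \chidag > \chidag$. Then $\phi(1-\chidag) < \phi(\chidag)$ by strict unimodality, which gives $g_i'(\chidag) < 0$.

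The intermediate value theorem now yields some $X_c \in (0,\chidag)$ with $g_i'(X_c) = 0$. This is the required interior critical point.

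I expect no real obstacle here. The only delicate step is the strict sign at $X = \chidag$, and that needs both the strictness of the maximum at $\chidag$ and the Regime III condition $\chidag < 1/2$. Without that condition the point $1-\chidag$ could coincide with $\chidag$, or lie on the rising side, where \Cref{lem:right-skew}(b) applies instead. As a side remark, $g_i'(1/2) = 0$ trivially, which is why the proof locates the critical point via the endpoint $\chidag$ rather than $1/2$: the resulting critical point is genuinely interior and separate from the symmetric one.
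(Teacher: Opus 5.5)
Your proposal is correct and follows essentially the same route as the paper's proof: you use $g_i'(X)=\phi(1-X)-\phi(X)$, get $g_i'(0)=\phi(1)>0$ from $\phi(0)=0$, get $g_i'(\chidag)<0$ from the strict decrease of $\phi$ past $\chidag$ together with $1-\chidag>\chidag$, and conclude by the intermediate value theorem. Your explicit formula for $\phi'$ is correct and spells out what the paper states only as $\phi'(\theta)\propto (n-1)K^n-(n+1)\theta^n$.
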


\begin{proof}
$g_i'(X) = \phi(1 - X) - \phi(X)$ (since $h_i'(Y) = -\phi(Y)$). Evaluating at the endpoints:
\[
g_i'(0) \;=\; \phi(1) - \phi(0) \;=\; \phi(1) \;>\; 0,
\]
since $\phi(0) = 0$ for $n > 1$ and $\phi(1) > 0$. The Hill density $\phi(\theta) = n K^n \theta^{n-1}/(K^n + \theta^n)^2$ satisfies $\phi'(\theta) \propto (n-1)K^n - (n+1)\theta^n$, so $\phi$ is strictly decreasing for $\theta > \chidag = K\bigl((n-1)/(n+1)\bigr)^{1/n}$. At $X = \chidag$ we have $1 - \chidag > \chidag$ (since $\chidag < 1/2$), so $\phi(1 - \chidag) < \phi(\chidag)$; hence
\[
g_i'(\chidag) \;=\; \phi(1 - \chidag) - \phi(\chidag) \;<\; 0.
\]
Continuity of $g_i'$ on $[0, \chidag]$ and the intermediate value theorem give $X_0 \in (0, \chidag)$ with $g_i'(X_0) = 0$.
\end{proof}

\subsection*{Detailed derivations}
\label{sec:omitted-proofs}

The full derivations of the results sketched above are collected here. The body and the statements earlier in this appendix retain only one-sentence interpretations.

\subsection{Proof of \texorpdfstring{\Cref{prop:favoredness-sufficient}}{the favoredness-sufficiency proposition}}
\label{ssec:proof-favoredness-sufficient}

CSP gives $\Fdelta(j/\Nc) + \Fdelta((\Nc - j)/\Nc) \geq 0$ for every $j$. By the strict clause of CSP, there exists $X_0 \in [0, 1/2]$ with $\Fdelta(X_0) + \Fdelta(1 - X_0) > 0$. By continuity of $\Fdelta$ (a finite affine combination of bounded-Lipschitz tail probabilities), the strict centro-symmetric inequality holds on an open neighborhood $(a, b) \subseteq [0, 1/2]$ of $X_0$. For $\Nc$ large enough that the grid step $1/\Nc$ is smaller than $b - a$, at least one grid point $j_0/\Nc$ lies in $(a, b)$, so the pair-sum $\Fdelta(j_0/\Nc) + \Fdelta((\Nc - j_0)/\Nc) > 0$ strictly. Every other pair is $\geq 0$ by CSP. When $\Nc$ is even, the unpaired middle term $\Fdelta(1/2) \geq 0$ by CSP at $X = 1/2$. The cumulative sum is therefore strictly positive, and the ratio of \Cref{eq:7} exceeds $1$. For fixed parameters with a strict-positive interval of width $w_* := b - a > 0$, the conclusion holds whenever $\Nc > 1/w_*$. The strict-positive interval can in principle be narrower than $1/\Nc$, so for small $\Nc$ with a tight strict-CSP margin the discrete sum of \Cref{prop:ratio-favored} should be checked directly rather than presumed from the large-$\Nc$ asymptotic. \qed

\subsection{Proof of \texorpdfstring{\Cref{thm:abc-preserved}}{the lifting theorem}}
\label{ssec:proof-abc-preserved}

The construction of $\bar U_g(X) := \E_{\theta}\bigl[\E_{\nu(\cdot \mid X, \theta)}[u_{\mathrm{perc}}^g(b)]\bigr]$ has two layers, inner over the fast-chain stationary distribution $\nu$ on the calibration state $b$ at fixed threshold $\theta$, outer over the threshold prior. Each layer is made exact by its own affine identity, and the atomic affine object is the per-genie perceived utility $u_{\mathrm{perc}}^g(b)$, not the payoff differential.

\paragraph{Fast-chain averaging is exact at fixed $g, \theta, \altvec$.}
The per-wisher perceived utilities of \Cref{eq:uperc-G2,eq:uperc-G1} are affine in $b$ for each binding:
\[
u_{\mathrm{perc}, i}^{\GTwo}(b) = u_{\mathrm{actual}}^{\GTwo} + \ualt{i}^{\GTwo} - \Bcap(1 - b),
\qquad
u_{\mathrm{perc}, i}^{\GOne}(b) = u_{\mathrm{actual}}^{\GOne} + \ualt{i}^{\GOne} + \Mcap - \Aa\, b,
\]
with $\ualt{i}^g := \altvec_i \cdot \mathbf{w}_g$ the wisher's altruistic-utility component under binding $g$ (\Cref{sec:welfare-utility}). Affine functions commute with expectation: for any probability measure $\nu$, $\E_\nu[u_{\mathrm{perc}, i}^g(b)] = u_{\mathrm{perc}, i}^g(\E_\nu[b])$.

The fast chain on $b$ is, under linear $R$ and linear $D$, itself affine: $b^{t+1} = (1 - L_b)\, b^t + \eta_b A_g \sigma_i^g(X) + \xi_t$, with linear relaxation rate $L_b := \eta_b A_g \sigma_i^g(X) + \mu\epsilon$ (so the Lipschitz constant of the deterministic part is $|1 - L_b|$, $< 1$ when $L_b \in (0, 2)$) and per-interaction noise $\xi_t$ centered by hypothesis. For the unconstrained affine chain, taking expectations under stationarity gives $m = (1 - L_b)\, m + \eta_b A_g \sigma$, hence $m = \eta_b A_g \sigma / L_b = b^*(X; \theta)$ with no bias from the noise. On a bounded $\mathcal{B}$, where reflection or truncation could otherwise displace it, the boundary-preserving stationary-mean assumption of \Cref{sec:wisher-dynamics} secures $\E_{\nu(\cdot \mid X, \theta)}[b] = b^*(X; \theta)$ directly; this identity, not the affine recursion as such, is what the rest of the proof uses.

Composing: $\E_{\nu(\cdot \mid X, \theta)}[u_{\mathrm{perc}}^g(b)] = u_{\mathrm{perc}}^g(b^*(X; \theta))$ exactly.

\paragraph{Threshold and altruism averaging are exact per binding.}
For each $g$, averaging over the threshold prior $\theta_i^g$ and the independent altruism prior $\altvec_i$,
\[
\bar U_g(X) \;=\; \E_{\theta, \altvec}\bigl[u_{\mathrm{perc}, i}^g(b^*(X; \theta_i^g), \altvec_i)\bigr] \;=\; \E_{\altvec}\!\bigl[u_{\mathrm{perc}, i}^g\bigl(\E_\theta[b^*(X; \theta_i^g)], \altvec_i\bigr)\bigr] \;=\; u_{\mathrm{perc}}^g\!\bigl(\bar\sigma_g(X), \altbar\bigr),
\]
where the second equality is the affineness of $u_{\mathrm{perc}}^g$ in $b$, and the third uses the sharp-threshold identity $b^*(X; \theta) = c_g\,\1[X \geq \theta]$ together with $\E_\theta[\1[X \geq \theta]] = 1 - \Ftbar{g}(X)$ to give $\bar\sigma_g(X) = c_g(1 - \Ftbar{g}(X))$, plus the linearity of $\ualt{i}^g = \altvec_i \cdot \mathbf{w}_g$ in $\altvec_i$ to give $\E_{\altvec}[\ualt{i}^g] = \altbar \cdot \mathbf{w}_g = W^g$. The difference $W^{\GTwo} - W^{\GOne} = \altbar \cdot \dHbar = \Wagg$ enters the payoff differential.

Subtracting the two per-binding identities reproduces \Cref{eq:3}:
\begin{align*}
\Fdelta(X) &= \bar U_{\GTwo}(X) - \bar U_{\GOne}(X) \\
&= u_{\mathrm{perc}}^{\GTwo}(\bar\sigma_{\GTwo}(X), \altbar) - u_{\mathrm{perc}}^{\GOne}(\bar\sigma_{\GOne}(X), \altbar) \\
&= (u_{\mathrm{actual}}^{\GTwo} - u_{\mathrm{actual}}^{\GOne}) + \Wagg - \Bcap - \Mcap + \Bcap\,\bar\sigma_{\GTwo}(X) + \Aa\,\bar\sigma_{\GOne}(X),
\end{align*}
identically.

\paragraph{Inheritance of the basin existence conditions.}
Because $\Fdelta$ equals the eq:3 form pointwise, each basin existence condition of \Cref{def:basin-conditions} transfers as a consequence of facts about $\bar\sigma_{\GTwo}, \bar\sigma_{\GOne}$ and the coefficients $\Bcap, \Aa$.

\emph{Monotonicity.} Both $\bar\sigma_{\GTwo}(X)$ and $\bar\sigma_{\GOne}(X)$ are non-decreasing in $X$, because the prior tail $\Ftbar{g}$ is non-increasing at each binding. The coefficients $\Bcap, \Aa \geq 0$. The composition is therefore non-decreasing in $X$, and $\Fdelta$ inherits the monotonicity pointwise.

\emph{Endpoint inversion.} Writing $u_w := (u_{\mathrm{actual}}^{\GTwo} - u_{\mathrm{actual}}^{\GOne}) + \Wagg$, at $X = 0$ both bindings have $\bar\sigma_g(0) = 0$, so
\begin{equation}
\Fdelta(0) = u_w - \Bcap - \Mcap.
\label{eq:lift-B-low}
\end{equation}
At $X = 1$, $\bar\sigma_g(1) = c_g(1 - \sigma_g^{\mathrm{sat}})$ per binding, so
\begin{equation}
\Fdelta(1) = u_w + \Aa\,\bar\sigma_{\GOne}(1) - \Bcap\,(1 - \bar\sigma_{\GTwo}(1)) - \Mcap.
\label{eq:lift-B-high}
\end{equation}

\emph{Centro-symmetric pairing (CSP).} For each $X \in [0, 1/2]$, direct substitution into the \Cref{eq:3} form gives the pair-sum
\begin{equation}
\Fdelta(X) + \Fdelta(1 - X) = 2(u_w - \Bcap - \Mcap) + \Bcap\bigl[\bar\sigma_{\GTwo}(X) + \bar\sigma_{\GTwo}(1 - X)\bigr] + \Aa\bigl[\bar\sigma_{\GOne}(X) + \bar\sigma_{\GOne}(1 - X)\bigr],
\label{eq:lift-C-sym}
\end{equation}
identical value-by-value to the corresponding pair-sum on the \Cref{eq:3} expression. A non-negative pair-sum on the pointwise form is therefore a non-negative pair-sum on $\Fdelta$, and strict positivity at any point in $[0, 1/2]$ transfers as strict positivity at the same point. \qed

\subsection{Proof of \texorpdfstring{\Cref{prop:basin-depth}}{the basin-depth proposition}}
\label{ssec:proof-basin-depth}

A Kramers-style barrier-crossing analysis on the Moran--Fermi birth--death chain of \Cref{ssec:favoredness-bias}, paralleling the stochastic-stability and escape-cost analysis for two-strategy logit-noise chains \citep[\S6--7]{sandholm_2010_stochastic} and the analogous treatment of best response with mutation in \citet{ellison_2000_basins}. The waiting-time scaling instantiates this logit-noise escape-cost analysis for the prior-induced gradient $\Fdelta$, recovering the classical mutation-selection results of \citet{foster_young_1990_stochastic} and \citet{kandori_mailath_rob_1993_learning} for $2\times 2$ coordination games as the uniform-mutation special case. What is specific to the present setting is finite-population and non-asymptotic. \Cref{eq:fixation-log-ceiling} bounds the barrier sum from the Lipschitz Riemann estimate and the running-maximum location alone, without the small-noise limit those results rest on.

\paragraph{Step 1: Mean first-passage on the discrete chain.}
For the slow chain on $j \in \{0, 1, \ldots, \Nc\}$ with up- and down-switch probabilities $T^+(j), T^-(j)$ of \Cref{ssec:favoredness-bias}, the leading-order mean-first-passage approximation for a birth--death chain gives the expected time to reach $j = \Nc$ starting from $j = 1$ as
\[
\E[\tau_{\mathrm{fix}}] \;=\; \sum_{k=1}^{\Nc - 1} \frac{1}{T^+(k)} \prod_{j=1}^{k} \frac{T^-(j)}{T^+(j)} \cdot \bigl(1 + o(1)\bigr),
\]
where the $(1+o(1))$ factor absorbs the inner geometric correction. Substituting the Fermi-rate ratio $T^-(j)/T^+(j) = \exp(-\Fdelta(j/\Nc)/\eta_s)$ of \Cref{eq:5} yields
\[
\E[\tau_{\mathrm{fix}}] \;=\; \sum_{k=1}^{\Nc - 1} \frac{1}{T^+(k)} \exp\!\left(-\frac{1}{\eta_s} \sum_{j=1}^{k} \Fdelta(j/\Nc)\right) \cdot \bigl(1 + o(1)\bigr),
\]
carrying only the $(1+o(1))$ correction noted above, the large-$\Nc$ reduction of the inner geometric series near the barrier.

\paragraph{Step 2: Discrete and continuous quasi-potential.}
Define the discrete quasi-potential as the running sum of selection gradients,
\[
V_{\Nc}(k/\Nc) \;:=\; -\frac{1}{\Nc} \sum_{j=1}^{k} \Fdelta(j/\Nc).
\]
Under the Lipschitz bound on $\Fdelta$ over $[0, 1]$ (following from the bounded density of the threshold prior; \Cref{eq:sigbar}), the right Riemann-sum error is $O(1/\Nc)$, so $V_{\Nc}$ converges uniformly to its continuous limit
\[
V(X) \;:=\; -\int_0^X \Fdelta(Y)\, dY, \qquad V_{\Nc}(k/\Nc) \;=\; V(k/\Nc) + O(1/\Nc).
\]
With $V(0) = 0$ and $V'(X) = -\Fdelta(X)$, monotonicity together with endpoint inversion gives $V$ strictly increasing on $(0, \chistar)$, attaining its maximum at $X = \chistar$, and strictly decreasing on $(\chistar, 1)$ (the interior $\chistar$ is the basin boundary of \Cref{thm:basin-boundary}). The basin depth is the barrier height,
\[
\cdepth \;=\; V(\chistar) - V(0) \;>\; 0.
\]

\paragraph{Step 3: Finite log-ceiling.}
The Karlin--Taylor terms are
\[
P_k \;=\; \prod_{j=1}^{k} T^-(j)/T^+(j) \;=\; \exp\bigl(\Nc\, V_{\Nc}(k/\Nc)/\eta_s\bigr),
\]
using $\sum_{j=1}^{k}\Fdelta(j/\Nc) = -\Nc\, V_{\Nc}(k/\Nc)$ and the Fermi-rate ratio of \Cref{eq:5}. Bounding the sum by its largest term times the number of terms,
\[
\sum_{k=1}^{\Nc-1} P_k \;\leq\; (\Nc-1)\,\max_{1 \leq k \leq \Nc-1} \exp\!\bigl(\Nc\, V_{\Nc}(k/\Nc)/\eta_s\bigr).
\]
The running maximum is controlled by the continuous barrier: $V_{\Nc}(k/\Nc) \leq V(\chistar) + L/\Nc = \cdepth + L/\Nc$, since $V$ attains its maximum $\cdepth$ at $\chistar$ (Step~2) and the right-Riemann error is bounded by $L/\Nc$ with $L$ the Lipschitz constant of $\Fdelta$ (\texttt{riemann\_sum\_error\_lipschitz}). Taking logarithms,
\[
\log\!\sum_{k=1}^{\Nc-1} P_k \;\leq\; \log(\Nc-1) + \frac{\Nc}{\eta_s}\Bigl(\cdepth + \frac{L}{\Nc}\Bigr) \;=\; \frac{\Nc}{\eta_s}\,\cdepth + \frac{L}{\eta_s} + \log(\Nc-1),
\]
which is \Cref{eq:fixation-log-ceiling}. This bound uses only the Lipschitz constant and the location of the running maximum at $\chistar$, not the $C^1$ or strict-crossing hypotheses; it is the only part of this proposition the achievability constraint of \Cref{ssec:fixation} consumes, and it is the form mechanized as \texttt{fixation\_log\_rate\_cdepth\_le}.

\paragraph{Step 4: Discrete Laplace approximation.}
The inner sum in Step 1 equals $\Nc V_{\Nc}(k/\Nc)$, so the product term is $\exp(\Nc V_{\Nc}(k/\Nc)/\eta_s)$. As $\Nc \to \infty$, the outer sum is dominated by terms near the barrier $k/\Nc = \chistar$. Expanding $V$ around $\chistar$,
\[
V(X) \;=\; \cdepth + \tfrac{1}{2} V''(\chistar)(X - \chistar)^2 + O\bigl((X - \chistar)^3\bigr),
\]
with $V''(\chistar) = -\Fdelta'(\chistar) < 0$. Applying the discrete Laplace method at grid spacing $1/\Nc$ against a continuous Gaussian of width $\sqrt{\eta_s/(\Nc |V''(\chistar)|)}$ gives
\[
\sum_{k=1}^{\Nc - 1} \exp\!\left(\frac{\Nc V_{\Nc}(k/\Nc)}{\eta_s}\right) \;\sim\; \sqrt{\frac{2\pi \Nc\, \eta_s}{|V''(\chistar)|}} \cdot \exp\!\left(\frac{\Nc \cdepth}{\eta_s}\right).
\]
The $T^+(k)^{-1}$ prefactor in Step 1 is $O(1)$ near the barrier. Combining,
\[
\E[\tau_{\mathrm{fix}}] \;\sim\; \sqrt{\Nc} \cdot \exp\!\left(\cdepth\, \Nc / \eta_s\right).
\]

\paragraph{Step 5: Selection-intensity inheritance.}
The selection intensity $\beta = 1/\eta_s$ enters the exponent inverse-linearly through the Fermi-rate ratio of \Cref{eq:5}, giving exponent $\Nc \cdepth / \eta_s$. Under the convention $\eta_s = \eta_b$ (\Cref{ssec:favoredness-bias}) this reads $\Nc \cdepth / \eta_b$, so the accessibility constraint \Cref{eq:10} is governed by the calibration-update step size, which enters both as the comparison temperature and, through $\rho_g$, in the barrier height $\cdepth$. \qed

\section{Threshold-prior shape class}
\label{sec:priors-shape}
\sloppy

The basin-existence framework rests on five prior-shape conditions, not on any specific parametric family: \emph{normalization}, the prior is a probability measure on $[0, \infty)$ or $[0, 1]$ with a density; \emph{tail regularity}, its tail $\Ftbari$ is continuous, monotone non-increasing, and bounded Lipschitz on $[0, 1]$; \emph{endpoint-low} and \emph{endpoint-high} of \Cref{def:basin-conditions} at $\Ftbari(0)$ and $\Ftbari(1)$; and \emph{centro-symmetric pairing} (CSP) at the prior's centro-symmetric maximum on $[0, 1/2]$. Normalization and tail regularity are intrinsic to the prior; endpoint-low, endpoint-high, and centro-symmetric pairing are the three conditions of \Cref{def:basin-conditions} read off at the values the prior supplies. Four families discharge them in the Lean~4 development: Hill (\texttt{hillPrior}, \texttt{Priors/Hill.lean}), Pareto Type~I (\texttt{paretoPrior}, \texttt{Priors/Pareto.lean}), Lomax / Pareto Type~II (\texttt{lomaxPrior}, \texttt{Priors/Lomax.lean}), and Fr\'echet / EV Type~II (\texttt{frechetPrior}, \texttt{Priors/Frechet.lean}). Each is a fully certified \texttt{ThresholdPrior} instance; normalization is handled in \texttt{Priors/Normalization.lean} (\texttt{hill\_integral\_eq\_one}, \texttt{pareto\_integral\_eq\_one}, \texttt{lomax\_integral\_eq\_one}, \texttt{frechet\_integral\_eq\_one}).

The four tails and the location of the centro-symmetric pair-sum maximum on $[0, 1/2]$ per family are as follows. Pareto, tail $(x_m/X)^\alpha$ with modal density at the scale $x_m$: the maximum is at $X = x_m$, proved by \texttt{paretoPairSum\_le\_at\_threshold}. Lomax, tail $(1 + X/\lambda)^{-\alpha}$ with monotone-decreasing density: the pair-sum is antitone, so the maximum is at $X = 0$, proved by \texttt{lomaxPairSum\_antitoneOn}. Hill, tail $K^n/(K^n + X^n)$, and Fr\'echet, tail $1 - e^{-(s/X)^\alpha}$, are the right-skew families: in Regime II the pair-sum is monotone on $[0, 1/2]$ (\texttt{hillPairSum\_monotoneOn}, \texttt{frechetPairSum\_monotoneOn}), so the maximum is at $X = 1/2$; in Regime III ($\chistar < 1/2$) it sits at an interior local maximizer, the critical point located by \texttt{regimeIII\_interior\_critical\_hill} and \texttt{regimeIII\_interior\_critical\_frechet}. In every figure the global $W_{\max}$ on $[0, 1/2]$ is then evaluated numerically, with the maximizer checked against a brute-force grid scan before any diagram is drawn.

These are three structurally distinct density shapes, monotone-decreasing (Lomax), scale-modal (Pareto), and right-skew unimodal (Hill, Fr\'echet), so the four families exercise the centro-symmetric-pairing condition across the range of cases it must handle.

The phase diagrams of \Cref{ssec:basin-diagrams} express these conditions through three dimensionless boundary functions. Dividing \Cref{eq:3} by $\Mcap$ and writing $\bar\sigma_g(X) = c_g(1 - \Ftbar{g}(X))$ gives the dimensionless payoff differential
\begin{equation}
\Fdelta(X)/\Mcap \;=\; \tilde w - b - 1 + \hat b\,\bigl(1 - \Ftbar{\GTwo}(X)\bigr) + \hat a\,\bigl(1 - \Ftbar{\GOne}(X)\bigr),
\label{eq:hill-Fdelta}
\end{equation}
in the controls of \Cref{app:notation}, with $\tilde w := d + u_d$, $a := \Aa/\Mcap$, $b := \Bcap/\Mcap$, and calibration-rescaled $\hat a := a\,c_{\GOne}$, $\hat b := b\,c_{\GTwo}$. Writing $g_i(X) := \Ftbar{i}(X) + \Ftbar{i}(1-X)$ and the pair-sum maximum $W_{\max} := \max_{X \in [0, 1/2]}\bigl[\hat b\, g_{\GTwo}(X) + \hat a\, g_{\GOne}(X)\bigr]$, the three conditions become
\[
\pilow := \tilde w - b - 1, \quad \pihigh := \pilow + \hat b\,(1 - \Ftbar{\GTwo}(1)) + \hat a\,(1 - \Ftbar{\GOne}(1)), \quad \pibind := 2\pilow + 2(\hat a + \hat b) - W_{\max},
\]
namely endpoint-low $\pilow < 0$, endpoint-high $\pihigh > 0$, and centro-symmetric pairing $\pibind \geq 0$ (strict at some $X$). The diagrams fix the working point $c_{\GOne} = c_{\GTwo} = 1$ (so $\hat a = a$, $\hat b = b$) and $d = 1$, and partition each scale plane by the zero level sets of $\pihigh$ and $\pibind$ for all four families.

\begin{table}[ht]
\centering
\small
\setlength{\tabcolsep}{6pt}
\renewcommand{\arraystretch}{1.5}
\begin{tabular}{@{}p{0.34\textwidth} c p{0.30\textwidth}@{}}
\toprule
Prior family, tail $\Ftbar{}(X)$ & $\chistar = \Ftbar{}^{-1}(v)$ (matched scale) & Lean shape-condition certificate \\
\midrule
Hill / log-logistic,\ \ $K^{n}/(K^{n}+X^{n})$ & $K\bigl((1-v)/v\bigr)^{1/n}$ & \texttt{hillPrior}; max at $\tfrac12$ or interior \\
Pareto Type~I,\ \ $(x_m/X)^{\alpha}$ on $X \ge x_m$ & $x_m\,v^{-1/\alpha}$ & \texttt{paretoPrior}; max at $x_m$ \\
Lomax / Pareto~II,\ \ $(1+X/\lambda)^{-\alpha}$ & $\lambda\,(v^{-1/\alpha} - 1)$ & \texttt{lomaxPrior}; max at $0$ \\
Fr\'echet / EV~II,\ \ $1 - e^{-(s/X)^{\alpha}}$ & $s\,\bigl(\ln\tfrac{1}{1-v}\bigr)^{-1/\alpha}$ & \texttt{frechetPrior}; max at $\tfrac12$ or interior \\
\bottomrule
\end{tabular}
\caption{Four threshold priors that satisfy the basin-existence conditions, spanning monotone-density (Lomax, Pareto) and unimodal-density (Hill, Fr\'echet) structure. Middle column: on the matched-scale slice the tipping point is the single inverse-tail value $\chistar = \Ftbar{}^{-1}(v)$ with $v = (u_d + d - 1 + a)/(a + b)$, defined when $v \in (\Ftbar{}(1), 1)$ so that a stable all-$\GTwo$ state exists. Right column: the Lean object certifying all five shape conditions, with the location of the centro-symmetric pair-sum maximum that each family's centro-symmetric-pairing lemma supplies; the full declaration list and the normalization route are in \Cref{sec:formalization-scope}. The shape index ($n$ or $\alpha$) is the architect's awareness-sharpness knob and the scale ($K$, $x_m$, $\lambda$, $s$) is unit choice; \Cref{fig:prior-grid} fixes $\alpha = 2$ to match Hill $n = 2$.}
\label{tab:prior-families}
\end{table}

\begin{table}[ht]
\centering
\small
\setlength{\tabcolsep}{8pt}
\renewcommand{\arraystretch}{1.3}
\begin{tabular}{@{}lccc@{}}
\toprule
Prior family & $\Ftbar{}(1)$ & $W_{\max}$ & $u_d^{\min} = 1 - a + W_{\max}/2$ \\
\midrule
Hill / log-logistic & $0.08$ & $1.19$ & $0.80$ \\
Pareto Type~I       & $0.04$ & $1.18$ & $0.79$ \\
Lomax / Pareto~II   & $0.18$ & $1.29$ & $0.85$ \\
Fr\'echet / EV~II   & $0.06$ & $1.18$ & $0.79$ \\
\bottomrule
\end{tabular}
\caption{Laundering threshold $u_d^{\min}$ across the four families, evaluated at the figure working point: a representative community (median acceptance threshold $0.3$; $a = 0.8$, $b = 0.3$; $c_{\GOne} = c_{\GTwo} = 1$; $\alpha = 2$ matched to Hill $n = 2$) on the $\Wagg = 0$ basin frontier ($d = 0$), where $u_d^{\min} = 1 - a + W_{\max}/2$. $W_{\max}$ is the pair-sum maximum located by each family's centro-symmetric-pairing maximizer (\Cref{tab:prior-families}); $\Ftbar{}(1)$ is the shared endpoint. The threshold is a field over the adoption plane, varying with the community (along the matched-scale diagonal it ranges over $[0.75, 1.30]$); at a fixed community it varies little across the prior because at the frontier it rides on $W_{\max}$, set by the perception levers and $\Ftbar{}(1)$ rather than the tail. The invariant is $u_d^{\min} > 0$ throughout: laundering requires the agent to be substantially, not marginally, more useful to the individual.}
\label{tab:udmin}
\end{table}

\section{Machine verification}
\label{sec:formalization-scope}
\sloppy

A machine-checked Lean~4 development against \texttt{mathlib} accompanies the analysis. This section separates the results derived from first principles inside Lean from the hypotheses retained at the boundary with the analytic asymptotics that frame them.

The development mechanizes the Moran--Fermi switch-rate ratio and the Karlin--Taylor ratio identity (\Cref{eq:5,eq:7}); the fixation probability $\phi_k$ of \Cref{eq:6}, whose absorbing-boundary harmonic recurrence is verified so that $\rho_{\GTwo,\GOne} = \phi_1$ and $\rho_{\GOne,\GTwo} = 1 - \phi_{\Nc-1}$ are obtained from it rather than posited as closed forms; the within-community ratio-favoredness criterion (\Cref{prop:ratio-favored}); the sufficiency of the structural conditions (\Cref{prop:favoredness-sufficient}), including the continuous-to-grid bridge and an explicit community-size threshold above which the cumulative sum is strictly positive; the affine lifting mechanism and the outer threshold-prior averaging $\bar\sigma_g(X) = c_g\bigl(1 - \Ftbar{g}(X)\bigr)$ underlying \Cref{thm:abc-preserved}; the finite-population accessibility rate bounds (the discrete-barrier log-rate form of \Cref{prop:basin-depth}); and \Cref{lem:right-skew} for the Hill density at general $n > 1$ in both its right-skew form (a) and its centro-symmetric monotonicity form (b), with the Regime~III interior-critical-point lemma (\Cref{lem:regime-iii-critical}) resting on the mechanized unimodality of the Hill density.

Three hypotheses are retained at the boundary, indexed (1)--(3) to match the table below. (1) Centro-symmetric pairing (CSP) is retained at the model capstone \texttt{g2\_ratio\_favored\_model}. Monotonicity of the model differential is derived from the monotonicity of $\bar\sigma_g$ (\texttt{modelFdelta\_condA}), and endpoint inversion reduces to the explicit control inequalities \Cref{eq:basin-B-low,eq:basin-B-high}. CSP is not derivable universally from the primitive parameters; it is hypothesised at the capstone. Per-family sufficient frontiers for CSP are mechanised via the centro-symmetric-pairing lemmas of \Cref{sec:priors-shape}, which exhibit the three structurally distinct cases CSP turns on. (2) The abstract capstone \texttt{accessible\_and\_favored} is stated for a $\Fdelta$ carrying continuity, a Lipschitz bound, and threshold-root regularity at $\chistar$. The model differential is shown continuous on $[0,1]$ via \texttt{modelFdelta\_continuousOn}; the global Lipschitz constant and the threshold-root regularity enter the abstract chain rather than being re-derived for the model. (3) The discrete-Laplace $\sqrt{\Nc}$ prefactor of \Cref{eq:kramers-asymptote} is hypothesised in \texttt{prop54\_log} as the classical leading-order form. The proved accessibility chain, \texttt{fixation\_log\_rate\_cdepth\_le} and its downstream consequences, does not consume it.

\begin{table}[ht]
\centering
\scriptsize
\renewcommand{\_}{\textunderscore\allowbreak}
\begin{tabular}{@{}p{1.9cm}p{3.2cm}p{1.1cm}p{1.3cm}p{3.4cm}@{}}
\toprule
Claim & Lean declaration(s) & Fully mech.? & Assump.\ level? & Notes \\
\midrule
\Cref{prop:ratio-favored} & \texttt{ratio\_favored\_iff}, \texttt{fixation\_recurrence} & Yes & No & Karlin--Taylor identity; $\phi_k$ (eq.~6) and its absorbing-boundary recurrence mechanized. \\
\Cref{prop:favoredness-sufficient} & \texttt{favoredness\_sufficient\_*} & Yes & No & Continuous-to-grid bridge; explicit floor $\Nc > 1/(\tfrac12-\chistar)$. \\
\Cref{thm:abc-preserved} & \texttt{affine\_commute\_integral}, \texttt{sigmabar\_eq\_integral}, \texttt{modelFdelta\_condA}, \texttt{modelFdelta\_condBhigh\_iff} & Partial & Partial & Inner/outer averaging and monotonicity derived; endpoint inversion explicit; CSP retained at the capstone (hypothesis 1). \\
\Cref{prop:basin-depth} & \texttt{accessible\_and\_favored} & Rate-form & Composition + $\sqrt{\Nc}$ & Finite ceiling and quasi-potential bridge derived; abstract continuity/Lipschitz/threshold-root inputs enter via composition (hypothesis 2); $\sqrt{\Nc}$ prefactor stated as classical form (hypothesis 3), unused in the chain. \\
\Cref{lem:right-skew}, \Cref{lem:regime-iii-critical} & \texttt{right\_skew\_general}, \texttt{right\_skew\_centro}, \texttt{regimeIII\_interior\_critical\_hill} & Yes & No & Both parts (a), (b); Regime III critical point, via Hill-density unimodality. \\
\bottomrule
\end{tabular}
\caption{Correspondence between manuscript claims and the Lean~4 development. ``Fully mechanized'' marks results proved from first principles; ``assumption-level'' marks results carrying an explicit analytic hypothesis in their statement. The three retained hypotheses are CSP at the model capstone, model-level accessibility composition, and the Kramers $\sqrt{\Nc}$ prefactor stated in classical leading-order form and not consumed downstream.}
\label{tab:formalization-map}
\end{table}

The development is \texttt{sorry}-free and introduces no custom axioms, depending only on the standard Lean/\texttt{mathlib} foundations \texttt{propext}, \texttt{Classical.choice}, and \texttt{Quot.sound}. The Lean sources and the phase-diagram scripts are available at \url{https://github.com/dlewissandy/two-genie-scripts} (commit \texttt{ebbd849}), built with Lean~\texttt{v4.29.0} and \texttt{mathlib}~\texttt{v4.29.0}; \texttt{lake build} re-checks the development, and the repository \textsc{readme} maps each paper result to its Lean declaration. The Lean declarations named in this paper reside under \texttt{formalization/}, the figure scripts under \texttt{figures/}.

\clearpage
\bibliographystyle{plainnat}
\bibliography{bibliography}

\typeout{get arXiv to do 4 passes: Label(s) may have changed. Rerun}
\end{document}